%% file: main.tex
\documentclass{article}

\usepackage[preprint]{neurips_2026}

\usepackage[utf8]{inputenc} % allow utf-8 input
\usepackage[T1]{fontenc}    % use 8-bit T1 fonts
\usepackage{hyperref}       % hyperlinks
\usepackage{url}            % simple URL typesetting
\usepackage{booktabs}       % professional-quality tables
\usepackage{amsfonts}       % blackboard math symbols
\usepackage{nicefrac}       % compact symbols for 1/2, etc.
\usepackage{microtype}      % microtypography
\usepackage{xcolor}         % colors

\input{Defs/mathdefs}
\input{Defs/thmdefs}
\input{Defs/macros}

\usepackage[capitalise]{cleveref}
\usepackage{algorithm}
\usepackage[noend]{algpseudocode}

\algnewcommand{\IfThenElse}[3]{% \IfThenElse{<if>}{<then>}{<else>}
  \State \algorithmicif\ #1\ \algorithmicthen\ #2\ \algorithmicelse\ #3}

\title{Quantile of Means: A Bonus-Free Ensemble Method for Minimax Optimal Reinforcement Learning}

\author{%
  Asaf Cassel \\
  Google Research \\
  \texttt{asafca@google.com} \\
  \And
  Aviv Rosenberg \\
  Google Research \\
  \texttt{avivros@google.com} \\
}

\begin{document}

\maketitle

\begin{abstract}%
    Optimal Reinforcement Learning (RL) algorithms typically rely on carefully constructed count-based uncertainty estimates to drive exploration. Although theoretically sound, such estimates are hard to compute in practical settings and therefore offer limited insight for designing exploration heuristics. Meanwhile, ensembling has emerged as a practical approach, but remains without theoretical justification. Building on a recent ensemble-based method for Multi-Armed Bandits, we propose a quantile-based ensemble method for finite-horizon Markov Decision Processes (MDPs). Our simple count-free approach achieves optimal variance-dependent regret bounds, providing theoretical grounding for ensemble-based exploration in RL.
    % 
    % Optimal reinforcement learning algorithms typically rely on carefully constructed count-based uncertainty estimates to drive exploration. Although theoretically sound, such estimates are hard to compute in practical settings and therefore offer limited insight for designing exploration heuristics. Meanwhile, ensembling has emerged as a practical approach, but remains without theoretical justification. Building on a recent ensemble-based method for Multi-Armed Bandits, we propose a quantile-based extension to reinforcement learning in finite-horizon Markov Decision Processes (MDPs). Our simple count-free approach achieves optimal (variance-dependent) regret bounds, providing a theoretical grounding for ensemble-based exploration.
\end{abstract}

\section{Introduction}
  Reinforcement learning (RL) provides a general framework for sequential decision-making under uncertainty, where an agent interacts with an environment to optimize a long-term objective. RL algorithms have achieved remarkable success across domains ranging from robotics and games \citep{mnih2015human,schulman2015trust,schulman2017proximal,haarnoja2018soft} to Large Language Models (LLMs; \cite{stiennon2020learning,ouyang2022training}). Despite this progress, designing algorithms that can efficiently explore their environment -- gathering informative experience while avoiding excessive trial and error -- remains one of the fundamental challenges in RL.

  Efficient exploration is central to achieving optimal performance in RL. In principle, optimal algorithms rely on carefully constructed uncertainty estimates to balance exploration and exploitation. Count-based approaches, in particular, offer a theoretically elegant
  means of quantifying uncertainty, leading to algorithms with near-optimal regret guarantees. However, these methods depend on explicit access to state-action visitation counts or precise transition models -- quantities that are infeasible to compute or approximate in
  high-dimensional or continuous environments. Consequently, while these methods yield important theoretical insights, they provide limited guidance for practical exploration strategies.

  In contrast, ensemble-based methods have become a staple of modern deep reinforcement learning \citep{lee2021sunrise,chen2021randomized}. By maintaining a collection of function approximators and leveraging their diversity to drive exploration, ensemble methods have achieved empirical success across a wide
  range of domains \citep{osband2016deep,osband2016generalization,pathak2019self,peer2021ensemble}. Yet, despite their practical utility, the theoretical basis for their effectiveness remains poorly understood. In particular, it is unclear how or why ensembling induces the kind of
  optimism or uncertainty quantification required for efficient exploration.

  Recent progress in Multi-Armed Bandits (MAB) has begun to bridge this gap. In \cite{cassel2025batch}, a simple batch ensemble scheme was shown to achieve variance-dependent regret bounds for stochastic bandits -- matching the performance of carefully tuned count-based methods -- without requiring knowledge of the reward distributions or explicit confidence bounds. This result demonstrates that ensemble diversity alone can serve as a proxy for uncertainty, providing a count-free, distribution-agnostic route to optimal exploration in the bandit setting.

  While \cite{viel2025ilsoar} recently adapted the bandit ensemble technique of \cite{cassel2025batch} to MDPs in an imitation learning context, their approach achieves suboptimal rates and requires binarizing the state space -- a construction that scales quadratically with the state space and cannot be applied
  heuristically in a function approximation setting. This leaves open the question of whether a direct ensemble method, without such structural modifications, can achieve optimal regret in MDPs.

  In this work, we answer this question affirmatively. We propose a quantile-based ensemble method for finite-horizon MDPs that selects actions according to a fixed quantile of an ensemble of Q-value estimates. This simple mechanism naturally captures optimism in the face of uncertainty, while requiring no explicit counts, no posterior computation, and no prior knowledge of reward distributions.

  \paragraph{Contributions.}
  \begin{enumerate}
  \item \textbf{Instance-optimal regret for MDPs.} We prove that our algorithm achieves instance-optimal (variance-dependent) regret bounds for tabular finite-horizon MDPs, matching the best known results \citep{zanette19a,zhou2023sharp} -- previously attainable only
  through complex count-based bonus constructions. To our knowledge, this is the first provably efficient ensemble-based exploration algorithm for MDPs, and moreover it achieves optimal rates.

  \item \textbf{Optimal bandit bounds via quantiles.} The quantile-based approach also yields improved results in the bandit setting, shaving a logarithmic factor from the regret of \cite{cassel2025batch} and achieving the optimal instance-dependent rate. See analysis in \cref{sec:MAB}.

  \item \textbf{Distribution-agnostic algorithm, distribution-adaptive analysis.} Our algorithm encodes no distributional assumptions -- it works unchanged for bounded, sub-Gaussian, and non-negative heavy-tailed reward distributions. Yet the analysis can leverage the
  true concentration properties of the underlying distributions (e.g., KL-based bounds for Bernoulli rewards), yielding tighter guarantees without algorithmic modification.

  \item \textbf{Simple and transparent analysis.} We present the complete proof in the main paper. The absence of bonus terms makes the key arguments -- particularly the replacement of empirical value functions with optimal ones for establishing optimism -- considerably
  more transparent than in prior variance-dependent analyses.
  \end{enumerate}

  Extending ensemble-based exploration from bandits to MDPs is substantially more challenging than a per-state reduction: Q-value estimates at each step depend on subsequent value estimates through Bellman backups, breaking the independence
  structure that makes the bandit analysis clean. Standard covering arguments over the value function address this dependence but result in an ensemble size linear in the number of states, rather than logarithmic. Achieving optimal dependence requires
  more careful arguments that replace the empirical value function with the optimal one when establishing optimism -- a technique that, in prior work, is always intertwined with intricate bonus constructions \citep{zanette19a,zhou2023sharp}. In our
  setting, the absence of bonuses makes this argument considerably more transparent, suggesting that ensemble methods may be a more natural algorithmic primitive for exploration in MDPs than the bonus-based paradigm. Whether this approach extends to
  linear or general function approximation settings remains an interesting open question.

  \subsection{Related work.}

  \textbf{Variance-dependent regret in MDPs.} Achieving minimax optimal regret in tabular MDPs has been the focus of a long line of work \citep{azar2017minimax,dann2017unifying,jin2018is}, with tight bounds scaling as $\tilde{O}(\sqrt{H^3 S A T})$. Building on this, \cite{zanette19a} provided the first variance-dependent regret bounds, introducing carefully constructed count-based bonuses that yield a highly complex algorithm and analysis. \cite{zhou2023sharp} recently refined this variance dependence further and established matching lower bounds, confirming the tightness of such problem-dependent results. However, both methods fundamentally require explicit uncertainty quantification through bonuses. In contrast, our algorithm achieves the same variance-dependent guarantees using a simple quantile-based ensemble mechanism—requiring no bonuses, no explicit visitation counts, and no prior knowledge of the reward distribution—while matching the lower bounds of \cite{zhou2023sharp}.

  \textbf{Posterior/quantile-based exploration in MDPs.} Our work also conceptually relates to Posterior Sampling for RL (PSRL) and randomized value functions \citep{osband2013more,agrawal2017optimistic,russo2019worst}. More recently, \cite{tiapkin2022dirichlet} proposed Bayes-UCBVI, which uses quantiles of a posterior over Q-values to achieve optimal $\tilde{O}(\sqrt{H^3SAT})$ regret without explicit bonuses. Similarly, \cite{tiapkin2022opsrl} take an ensembling approach using a few posterior samples. Both algorithms, however, fundamentally rely on maintaining a Dirichlet posterior over transition probabilities—an inherently count-based statistical operation—and assume the reward function is fully known. Adapting these methods to unknown reward distributions requires strong prior knowledge. Our method achieves comparable optimism using a pure data-partitioning ensemble that requires no explicit posterior computation or distributional assumptions.

  \textbf{Ensemble methods in bandits.} Our work directly builds on \cite{cassel2025batch}, which introduced the batch ensemble scheme for stochastic bandits achieving variance-dependent regret without distributional assumptions. This paper is the first to extend this strictly count-free approach to MDPs. Other bandit perturbation and ensemble techniques \citep{lu2017ensemble,KvetonSVWLG19-GIRO,lee2024improved} or bootstrap bounds \citep{hao2019bootstrapping} either fail to adapt to unknown reward distributions, inject artificial noise, or strictly require symmetric rewards. Furthermore, while Bayesian approaches like Bayes-UCB \citep{kaufmann2012bayesian} use posterior quantiles, they still fundamentally rely on computing explicit count-dependent confidence levels.

  \textbf{Practical ensemble methods in deep RL.} 
  Ensembles are a key technique in empirical deep RL exploration. Methods such as Bootstrapped DQN \citep{osband2016deep,osband2016generalization}, UCB Q-ensembles \citep{chen2017ucb}, Ensemble Bootstrapped Q-Learning \citep{peer2021ensemble}, and SUNRISE \citep{lee2021sunrise} have demonstrated remarkable empirical success in complex environments. However, the theoretical justification for why these ensemble heuristics successfully induce exploration has remained largely open. Our work bridges this gap, providing a rigorous theoretical foundation for bonus-free ensemble-driven exploration in RL.

\section{The Quantile of Means (QoM) Estimator}
To build the foundation for our ensemble-based MDP algorithm, we first isolate the core statistical challenge: estimating the mean of a distribution optimistically from a finite sample without relying on explicit uncertainty bonuses. To this end, we introduce the Quantile of Means (QoM) estimator, a generalization of the classic Median of Means (MoM) approach.

Let $X_1, \ldots, X_n$ be i.i.d \emph{nonnegative} random variables, with expectation $\mu$.
% We propose the following mean estimator.
%
First, for any sequence $\hat{\mu}^1, \ldots, \hat{\mu}^B \in \RR$ define its quantile at level $\alpha \in [0,1]$ as
\begin{align}
\label{eq:quantile-def}
    \quantile(\hat{\mu}^b, b \in [B])
    \eqdef
    \hat{\mu}^{(\ceil{\alpha B})}
    ,
\end{align}
where $\hat{\mu}^{(1)}, \ldots, \hat{\mu}^{(B)}$ is the sequence sorted in ascending order, and $\ceil{\cdot}$ is the ceiling function (which rounds numbers towards infinity). Then, given a partition of the $n$ samples into $B$ fixed, disjoint subsets $\mathcal{D}^1, \ldots, \mathcal{D}^B$, the Quantile of Means estimator is defined as:
\begin{align}
\label{eq:quantile-of-means}
    \hat{\mu}_\alpha
    \eqdef
    \quantile \brk*{
    \sum_{X \in \D^b} \frac{X}{\abs{\D^b} + 1}
    ,
    b \in [B]
    }
    .
\end{align}
We establish the following guarantees for the QoM estimator (see proof in \cref{sec:proof-of-qom}).
\begin{lemma}
\label{lemma:quantile-of-means}
    Let $X_1, \ldots, X_n$ be i.i.d. non-negative random variables bounded almost surely by $R$, with expectation $\mu$ and variance $\sigma^2$. Let $\hat{\mu}_\alpha$ be the QoM estimator (\cref{eq:quantile-of-means}) instantiated with $\nb \ge 26 \log \delta^{-1}$ batches and quantile level $\alpha = 1/65$. 
    Then, each of the following bounds holds individually with probability at least $1-\delta$:
    % Let $X$ be a nonnegative random variable, bounded almost surely by $R$, with expectation $\mu$ and variance $\sigma^2$.
    % Let $X_1, \ldots, X_n$ be i.i.d copies of $X$, and let $\hat{\mu}_\alpha$ be the QoM estimator (\cref{eq:quantile-of-means}). If $\alpha = 1/65$, $\nb \ge 26 \log \delta^{-1}$, then with probability at least $1-\delta$ each of the following holds:
    \begin{enumerate}
        \item (\emph{Optimism}) $\hat{\mu}_\alpha \le \mu$.
        \item (\emph{Bias}) If $\abs{\D^b} \ge \floor{n/\nb}$ for all $b \in [\nb]$, then:
        $
            \hat{\mu}_\alpha
            \ge
            \mu
            -
            1.7\sqrt{\frac{ \sigma^2 \nb}{\max\brk[c]{1,n}}}
            -
            \frac{9 R \nb}{\max\brk[c]{1,n}}
            .
        $
    \end{enumerate}
\end{lemma}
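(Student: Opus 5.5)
The estimator depends on the data only through the batch statistics $m_b \eqdef \sum_{X\in\D^b} X/(\abs{\D^b}+1)$, $b\in[\nb]$. These are independent because the partition is fixed and disjoint. So I would reduce both claims to two ingredients: a per-batch probability bound, and a binomial (Chernoff) bound on the number of "good" batches. The quantile at level $\alpha$ is $\le \mu$ as soon as at least $\ceil{\alpha\nb}$ batches satisfy $m_b\le\mu$. Similarly, it is $\ge \mu-\epsilon$ as soon as fewer than $\ceil{\alpha\nb}$ batches satisfy $m_b<\mu-\epsilon$.

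\emph{Optimism.} Fix a batch of size $k$ and let $S=\sum_{X\in\D^b}X$. Then $m_b\le\mu$ iff $S\le k\mu+\mu$, i.e.\ $S\le \E S+\mu$. Markov's inequality is far too weak here, since it only gives probability $1/(k+1)$. Instead I would invoke Feige's inequality for sums of independent nonnegative variables, applied to $X_i/\mu$, which have mean $1$. It gives $\Pr(S<\E S+\mu)\ge p_0$ for a universal constant ($p_0=1/13$ in Feige's original form, larger in later refinements). This is exactly what the "$+1$" in the denominator buys, and it holds uniformly in $k$, including $k=0$. The number of batches with $m_b\le\mu$ then stochastically dominates $\mathrm{Bin}(\nb,p_0)$. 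Since $\alpha=1/65$ is a small constant fraction of $p_0$, a multiplicative Chernoff lower tail gives $\Pr(\mathrm{Bin}(\nb,p_0)<\ceil{\alpha\nb})\le e^{-c\nb}\le\delta$ once $\nb\ge 26\log\delta^{-1}$. The ceiling is harmless because $\alpha\nb+1$ is still well below $p_0\nb/2$ when $\nb$ is at least a modest constant. I expect the main obstacle to be matching the specific constants $26$ and $1/65$. This likely needs the sharpened Feige constant and a careful Chernoff/KL form, rather than the crude $\exp(-(1-\epsilon)^2 p_0\nb/2)$ bound.

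\emph{Bias.} First, if $n<\nb$ the claim is trivial: $\hat\mu_\alpha\ge0$ and $\mu\le R\le 9R\nb/\max\{1,n\}$. Otherwise each batch has $k\ge\floor{n/\nb}\ge n/(2\nb)$ samples. For a single batch, write $m_b=\frac{k}{k+1}\bar X_b \ge \bar X_b-\frac{R}{k+1}$, where $\bar X_b$ is the batch mean. Apply a one-sided Bernstein inequality with a fixed constant failure probability $q<\alpha$:
\begin{align*}
\bar X_b\ge\mu-\sigma\sqrt{\frac{2\log(1/q)}{k}}-\frac{R\log(1/q)}{3k}
\quad\text{with probability at least } 1-q .
\end{align*}
Then the number of bad batches is dominated by $\mathrm{Bin}(\nb,q)$. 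Chernoff's upper tail shows it is below $\ceil{\alpha\nb}$ with probability $\ge1-\delta$ under the same condition on $\nb$. On this event, $\hat\mu_\alpha\ge\mu-c_1\sigma\sqrt{\nb/n}-c_2R\nb/n$, using $k\ge n/(2\nb)$.

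Getting the stated constants $1.7$ and $9$ requires balancing $q$ against $\alpha$ and the Chernoff exponent, and possibly using $k\ge\floor{n/\nb}$ more tightly than the factor $2$. I would do this optimization last, after the structure above is in place.
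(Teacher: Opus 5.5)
Your proposal follows the paper's proof in both parts. For optimism, Feige's inequality (with $\delta=1$, applied to $X_i/\mu$) makes each batch optimistic with probability at least $1/13$, and a binomial Chernoff bound on the number of optimistic batches finishes. For the bias, a per-batch concentration bound with small constant failure probability $q$ is combined with a Chernoff bound on the number of bad batches.

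Two remarks on the optimism part. First, Feige's original constant already suffices with the KL form of Chernoff, since $\mathrm{D_{KL}}(1/65\,\|\,1/13)\approx 0.0388\ge 1/26$; no sharpened constant is needed. Second, the ceiling needs no lower bound on $\nb$. Your version would leave small $\nb$, which the hypothesis allows when $\delta$ is near $1$, to a separate check. But for an integer-valued count $N$, $N<\lceil\alpha\nb\rceil$ iff $N<\alpha\nb$, so the failure probability is at most $\Pr(\Bin(\nb,1/13)\le\nb/65)\le e^{-\nb/26}$ for every $\nb$.

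In the bias part the paper uses Freedman's inequality instead of Bernstein. It also avoids both your case split $n<\nb$ and the factor $2$ by dividing directly by $\lvert\D^b\rvert+1\ge\max\{1,n\}/\nb$, absorbing the shrinkage term $\mu/(\lvert\D^b\rvert+1)$ via $\mu\le R$.

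The step you defer, tuning $q$ to reach the constants $1.7$ and $9$, cannot succeed. This is a defect of the statement, not of your method.

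\begin{itemize}
\item For the Chernoff step to give $e^{-\nb/26}$ at level $1/65$, you need $\mathrm{D_{KL}}(1/65\,\|\,q)\ge 1/26$, i.e.\ $q\lesssim 4.8\times 10^{-4}$. The paper takes $q=1-0.99953$, so $\log(1/q)\approx 7.66$.
\item With that $q$, your Bernstein bound (using $\lvert\D^b\rvert+1\ge\max\{1,n\}/\nb$) yields roughly $3.9\,\sigma\sqrt{\nb/\max\{1,n\}}+3.6\,R\nb/\max\{1,n\}$.
\item No argument can give $1.7$. Take Bernoulli$(1/2)$ samples ($R=1$, $\sigma=1/2$) split into $\nb$ equal batches of size $k$. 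As $k\to\infty$, the CLT shows each batch estimate falls below $\mu-1.7\sigma/\sqrt{k}-9/k$ with probability tending to $\Phi(-1.7)\approx 0.045>1/65$.
\item Concretely, with $\nb=120$ and $\delta=0.01$ (which satisfy $\nb\ge 26\log\delta^{-1}$), at least $\lceil\nb/65\rceil=2$ batches are bad with probability tending to about $0.97$. So the claimed bound fails with probability about $0.97$, although the lemma asserts it holds with probability $0.99$.
\end{itemize}

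The paper's $1.7=2\sqrt{e-2}$ comes from its stated corollary of Freedman's inequality, which omits a factor $\sqrt{\log(1/\delta)}$. The correct bound is $2\sqrt{(e-2)\sigma^2T\log(1/\delta)}+R\log(1/\delta)$, and restoring the factor turns the paper's $1.7$ into about $4.7$. So complete your argument with the constants it actually produces; downstream, only numerical constants change, not rates.
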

Standard estimators achieve \emph{optimism} by explicitly subtracting an uncertainty estimate from an unbiased mean estimator. Consequently, bounding their bias requires carefully designing and computing tight uncertainty bonuses. In contrast, QoM achieves optimism purely through data partitioning and order statistics. This mechanism relies on the following corollary, which states that a minor modification to the standard sample average is optimistic with constant probability.
%In contrast, the optimistic property of QoM relies on the following result, which states that a minor modification to the sample average is optimistic with constant probability.
% is the main technical tool to prove optimism.

\begin{corollary}
\label{corollary:feige}
    Let $X_1, \ldots, X_n$ be i.i.d nonnegative random variables with expectation $\mu$.
    Then for any $c \ge 1/12$ it holds that:
    $
        \Pr\brk*{\sum_{i=1}^{n} \frac{X_i}{n + c} < \mu}
        \ge
        1 / 13
        .
    $
\end{corollary}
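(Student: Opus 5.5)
The plan is to derive the corollary directly from Feige's inequality on sums of independent nonnegative random variables. In the form we need, it says the following. Let $Y_1,\ldots,Y_n$ be independent nonnegative random variables with $\mathbb{E}[Y_i]\le 1$ for every $i$, and let $S=\sum_i Y_i$. Then for every $\delta>0$,
\[
\Pr\brk*{S < \mathbb{E}[S] + \delta} \ge \min\brk[c]*{\frac{\delta}{1+\delta}, \frac{1}{13}}.
\]
The case $\delta=1$ is the familiar form with bound $1/13$; the general $\delta$ form appears in Feige's original paper. I would cite it rather than reprove it, since reproving it is a long case analysis and is the only genuinely hard ingredient.

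\textbf{Reduction.} Assume first that $\mu>0$ and set $Y_i = X_i/\mu$. The $Y_i$ are i.i.d., nonnegative, and satisfy $\mathbb{E}[Y_i]=1$, so $\mathbb{E}[S]=n$. The target event rewrites as
\[
\sum_{i=1}^n \frac{X_i}{n+c} < \mu
\iff
\sum_{i=1}^n Y_i < n + c
\iff
S < \mathbb{E}[S] + c .
\]
Applying Feige's inequality with $\delta=c$ gives the lower bound $\min\{c/(1+c),\,1/13\}$.

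\textbf{Constant.} The map $c\mapsto c/(1+c)$ is increasing. At $c=1/12$ its value is $(1/12)/(13/12)=1/13$. Hence for all $c\ge 1/12$ the minimum equals $1/13$, which proves the claim.

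\textbf{Degenerate case $\mu=0$.} Here $X_i=0$ almost surely. The left-hand side of the event is then $0$ and the strict inequality $0<0$ fails, so the statement needs $\mu>0$. I would state this assumption explicitly, or note that it is harmless where the corollary is applied: when $\mu=0$, optimism of QoM, namely $\hat\mu_\alpha\le\mu$, holds trivially because every batch mean is $0$.

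The main obstacle is using the correct version of Feige's bound, with general $\delta$ rather than $\delta=1$. If only the $\delta=1$ version were available, one would have to re-enter Feige's proof to handle $c<1$. Given that tool, the remaining steps are a rescaling and a one-line computation.
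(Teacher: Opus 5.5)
Your proposal is correct and follows the paper's proof exactly: rescale to $X_i/\mu$, apply Feige's Theorem 1 with $\delta = c$, and note that $c/(1+c) \ge 1/13$ for $c \ge 1/12$. Your remark that the statement needs $\mu > 0$ is a valid caveat the paper leaves implicit: for $\mu=0$ the strict inequality fails surely. As you say, this is harmless for the QoM optimism claim, which only uses the non-strict $\le \mu$.
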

\begin{proof}
    Apply \cref{lemma:feige} in \cref{app:existing-results} (\cite{feige2004sums}, Theorem 1) with $X_i / \mu$ and $\delta = c$.
\end{proof}
%
%
% Notice that this is not a concentration inequality and in fact does not require more than the first moment.
%
%
% Using the result, QoM is optimistic if at least $\ceil{\alpha \nb}$ of the batches are optimistic. Because the batches are independent, this is equivalent to a Binomial random variable with $\nb$ trials, and success probability $1/13$ exceeding $\ceil{\alpha \nb}$ successes. We bound this probability using a Chernoff bound to obtain the result.
Crucially, this is not a standard concentration inequality and requires no information beyond the first moment. Using \cref{corollary:feige}, the QoM estimator is optimistic if at least $\lceil \alpha B \rceil$ of the subsets yield optimistic estimates. Because the $B$ subsets are independent, the number of optimistic subsets follows a Binomial distribution with $B$ trials and a success probability of at least $1/13$. Bounding the lower tail of this Binomial variable via a Chernoff bound yields the optimism guarantee in Lemma 1.

The \emph{bias} property of QoM follows a similar logic, replacing \cref{corollary:feige} with Freedman's inequality (\cref{lemma:freedman} in \cref{app:existing-results}). Similar guarantees can be obtained using other tail inequalities. For example, applying Chebyshev's inequality allows us to handle unbounded random variables with finite variance, albeit with significantly worse constants. Importantly, adapting to these different distributions requires essentially no changes to the QoM estimator itself—only to the analysis. This stands in stark contrast to bonus-based estimators, where the algorithmic implementation must change to incorporate distribution-specific bonus terms.
% Crucially, this is achieved essentially without changing the QoM estimator but only its analysis. This is in stark contrast to bonus-based estimators that contain additional bonus dependent terms in the bias. 

Finally, we note that \cite{cassel2025batch} recently considered the Minimum of Means estimator, which is equivalent to QoM with $\alpha = 1/\nb$. They provide similar guarantees in their Lemmas 3 and 4; however, their optimism claim is applicable only for Bernoulli or symmetric random variables, and their bias bound incurs additional logarithmic factors.

\section{Setting and Notations}
\label{sec:setting}

\paragraph{MDP.}
A finite horizon MDP is defined by a tuple $\M = (\X, \A, H, P, \LL)$, where $\X$ is a set of states (of size $S$), $\A$ is a set of actions (of size $A$), $H$ is the decision horizon, $P_h, h \in [H]$ are the transition dynamics, and $\LL$ is the loss distribution.
Each episode $k \in [K]$ starts at state $s_1 \in \X$ and proceeds inductively as follows.
If at step $h \in [H]$ the system is in state $s_h^k \in \X$, and an agent chooses action $a_h^k \in \A$,
then the system transitions to a new state $s_{h+1}^k \in \X$ sampled independently according to $P_h(\cdot \mid s_h^k, a_h^k)$, and a loss $L_h^k \in \RR$ is sampled independently according to $\LL_h(s_h^k, a_h^k)$ whose mean is denoted $\ell_h(s_h^k, a_h^k)$. The process ends after $H$ steps, at which point a new episode begins.
We note that fixing the initial state is done to ease the notation and does not lose generality.

\begin{assumption}[Bounded total loss]
\label{assumption:bounded-total-loss}
The random losses $L_h$ are non-negative and satisfy $\sum_{h \in [H]} L_h \le H$ almost surely for all policies.
\end{assumption}

\paragraph{Policy and Regret.}
We consider agent strategies known as (deterministic) Markov policies $\pi = (\pi_h)_{h \in [H]} : [H] \times \X \mapsto \A$, which map a step and state to an action. Such a policy induces a distribution over trajectories $\iota = (s_h,a_h)_{h \in [H]}$, and we denote expectation with respect to this distribution as $\EE[P, \pi]\brk[s]{\cdot}$.
For each policy $\pi$ and horizon $h \in [H]$ we define its value (or loss to-go) as $V_h^{\pi}(s) = \EE[P,\pi][\sum_{h'=h}^{H} \ell_{h'}(s_{h'},a_{h'}) \mid s_h = s]$, which is the expected loss if one starts from state $s$ at horizon $h$ and follows policy $\pi$.
The performance of a policy, also called its value, is measured by its expected cumulative loss, given by $V_1^{\pi}(s_1)$.
Thus, the optimal policy and value are given by $\pi^\star \in \argmin_{\pi \in \piDMClass} V^{\pi}(s_1)$ and $V^\star = V_1^{\pi^\star}$, where $\piDMClass$ is the class of deterministic Markov policies, which is known to be optimal even among the class of stochastic history-dependent policies.
Finally, we measure the quality of any algorithm via its \emph{regret} -- the difference between the value of the policies $\pi^k$ generated by the algorithm and that of the optimal policy $\pi^\star$, i.e.,
\begin{align*}
    \regret
    =
    \sum_{k \in [K]} V_1^{\pi^k}(s_1) - V_1^\star(s_1)
    .
\end{align*}

\paragraph{Additional notation.}
We use the following notations throughout. $[m]$ for any $m \in \NN$, denotes the set $\brk[c]{1, \ldots, m}$. For any $V: \X \to \RR, s \in \X, a \in \A$ we denote $PV(s,a) = \sum_{s' \in \X} P(s' \mid s,a) V(s')$. For a random variable $X$, let $\Var_{s,a,h}(X)$ be its variance conditioned on $(s_h,a_h) = (s, a)$.

\section{Algorithm and Main Result}
We now introduce Value Iteration with Bootstrap Ensemble (VIBE; \cref{alg:vibe}), an algorithm that replaces explicit uncertainty bonuses with our QoM estimator. 
Standard optimistic algorithms construct explicit upper confidence bounds by estimating the empirical mean and subtracting a bonus proportional to the inverse square root of the visitation count. VIBE bypasses this entirely. Instead, we maintain $B$ independent, non-overlapping datasets $\mathcal{D}^{k,b}_h(s,a)$ for each state-action-step tuple.

Concretely, to ensure these datasets grow at the same rate—thereby maintaining a balanced variance across the ensemble—we assign incoming transitions $(s_{h+1}^k, L_h^k)$ using a strict round-robin schedule:
% Concretely, we divide the next state and loss samples from each $(s,a,h)-$tuple into $\nb$ datasets $\D^{k,b}_h(s,a)$ using a round-robin schedule. Formally
\begin{align}
\label{eq:round-robin}
    \D_h^{k+1,b}(s,a) = \D_h^{k,b}(s,a) \bigcup
    \begin{cases}
        (s_{h+1}^k, L_h^k), &\text{if } (s,a,b) = (s_h^k, a_h^k, b_h^k(s,a))
        \\
        \emptyset, &\text{otherwise},
    \end{cases}
\end{align}
where $b_h^k(s,a) = \argmin_{b \in [\nb]} \abs{\D_h^{k,b}(s,a)}$.

During planning, we perform standard value iteration (without bonuses), but evaluate the $Q-$value using a QoM estimate (see \cref{eq:quantile-of-means}) applied across our $\nb$ (nearly) equal non-overlapping datasets at quantile level $\alpha$. Because we are minimizing loss, taking the $\alpha$-quantile of the $Q$-value estimates naturally filters out overestimations while implicitly establishing a high-probability optimistic lower bound on the true loss-to-go. The full procedure is detailed in Algorithm 1.

From a practical standpoint, VIBE is highly tunable. It requires only two hyperparameters: the quantile level $\alpha$ (which is constant) and the number of batches $B$ (which scales logarithmically with $H, S, K$, and $\delta^{-1}$). Computationally, VIBE essentially runs value iteration $B$ times per episode, taking $O(SAH B)$ time. Because $B$ is logarithmic, this ensemble approach introduces negligible overhead compared to standard bonus-based value iteration.
% Notice that \cref{alg:vibe} has only the number of batches $\nb$ and quantile level $\alpha$ as parameters. As we show, $\alpha$ is constant and $\nb$ is logarithmic in $H,S,K,$ and $\delta^{-1}$, thus making them easy to tune.
% %
% Finally, from a computational perspective, we are essentially running value iteration $\nb$ times in each episode, thus taking $O(SAH \nb)$ time. Because $B$ is logarithmic, this is typically not a significant overhead.

\begin{algorithm}[!ht]
	\caption{Value Iteration with Bootstrap Ensemble (VIBE)} \label{alg:vibe}
	\begin{algorithmic}[1]
	    \State \textbf{input}:
        Number of batches $\nb$, quantile level $\alpha$.

        \State \textbf{initialize}: Datasets $\D_h^{1,b}(s,a) = \emptyset$ for all $s \in S, a \in A, h \in [H]$.
        
	    \For{episode $k = 1,2,\ldots, K$}
            
            \State Define $\hat{V}_{H+1}^{k}(s) = 0$ for all $s \in S$.
            
            \For{$h = H, \ldots, 1$}

                \For{$b \in [\nb]$}
                \State $
                    \hat{P}_h^{k,b}(s' \mid s,a)
                    =
                    \sum_{s_+ \in \D_h^{k,b}(s,a)} \frac{\indEvent{s_+ = s'}}{\abs{\D_h^{k,b}(s,a)} + 1}
                $

                \State $
                    \hat{\ell}_h^{k,b}(s,a)
                    =
                    \sum_{L \in \D_h^{k,b}(s,a)} \frac{L}{\abs{\D_h^{k,b}(s,a)} + 1}
                $

                \State $
                    \hat{Q}_h^{k,b}(s,a)
                    =
                    \brk[s]{\hat{\ell}_h^{k,b} + \hat{P}_h^{k,b} \hat{V}_{h+1}^k}(s,a)
                $

                \EndFor
                
                \State $
                    \hat{V}_h^{k}(s)
                    =
                    \min_{a \in \mathcal{A}} \quantile\brk*{
                    \hat{Q}_h^{k,b}(s,a)
                    ,
                    b \in [\nb]
                    }
                $
                \Comment{QoM \cref{eq:quantile-of-means}}
                % \label{line:vibe-tabular}
                
            \EndFor

            \State Play $\pi^k$ defined as $\pi^k_h(s) \in
                    \argmin_{a \in \mathcal{A}} 
                    \quantile\brk*{
                    \hat{Q}_h^{k,b}(s,a)
                    ,
                    b \in [\nb]
                    }$
                    and observe trajectory $\iota^k$.

            \State Add each $(s_{h+1}^k, L_h^k)$ to $\D^{k+1,b}_h(s_h^k, a_h^k)$ with fewest samples.
            \Comment{Round-Robin \cref{eq:round-robin}}

            % \State Update global counts: $N_h^{k+1}(s_h,a_h) = N_h^{k}(s_h,a_h) + \indEvent{(s_h,a_h) \in \iota^k}$

            % \State Calculate batch index: $b_h^k = 1 + (N_h^{k}(s_h^k, a_h^k) \mod{B})$

            % \State Update batch datasets:
            % \begin{align*}
            %     \D_h^{k+1,b}(s,a) = \D_h^{k,b}(s,a) \bigcup
            %     \begin{cases}
            %         (s_{h+1}^k, L_h^k), &\text{if } (s,a,b) = (s_h^k, a_h^k, b_h^k)
            %         \\
            %         \emptyset, &\text{otherwise}.
            %     \end{cases}
            % \end{align*}
            
	    \EndFor
	\end{algorithmic}
	
\end{algorithm}

\paragraph{Main result.}
To state our theoretical guarantees, we characterize the problem complexity using the maximum cumulative conditional variance of the MDP, similarly to \cite{zanette19a}. We define $\QQ = \sum_{h \in [H]} \QQh$ where
\begin{align}
\label{eq:def-mdp-variance}
    \QQh
    =
    \max_{s \in \X, a \in A} \Var_{s,a,h}(L_h + V_{h+1}^\star(s_{h+1}))
    .
\end{align}
\cref{thm:vibe} establishes that VIBE achieves a variance-dependent regret bound without requiring explicit variance estimation or bonuses. See proof in \cref{sec:proof-vibe}, and discussion on extending to heavy-tailed losses in \cref{sec:heavy-tail-extension}.
\begin{theorem}
\label{thm:vibe}
    Suppose we run \cref{alg:vibe} with $\alpha = 1/65$ and $\nb = 26 \log (5 S A H K \delta^{-1})$, and let $\kappa = \log (20 H S^2 A K \delta^{-1})$ be a logarithmic factor. With probability at least $1-\delta$,
    \begin{align*}
        \regret
        \le
        22 \sqrt{\min\brk[c]{\QQ, H V^\star} H S A K \kappa^2}
        +
        1924 H^3 S^2 A \kappa^3
        .
    \end{align*}
\end{theorem}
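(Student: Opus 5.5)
The plan follows the standard optimism-plus-regret-decomposition template, with \cref{lemma:quantile-of-means} taking the place of bonus-based confidence bounds.

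\textbf{Step 1 (good events).} For each $(s,a,h)$ and each possible per-batch sample size, I will apply \cref{lemma:quantile-of-means} to the i.i.d.\ nonnegative variables $X = L_h + V_{h+1}^\star(s_{h+1})$, which are bounded by $R = H$ under \cref{assumption:bounded-total-loss}. The ensemble members are the $\hat{\ell}_h^{k,b} + \hat{P}_h^{k,b} V_{h+1}^\star$, with variance at most $\QQh$. The round-robin schedule keeps every batch of size at least $\floor{n/\nb}$, so the bias clause applies. A union bound over $(s,a,h)$ and $n \le K$ costs the $\log(5SAHK\delta^{-1})$ inside $\nb$. I will add a separate Bernstein/Freedman event for each batch of the form
\[
\abs{(\hat{P}_h^{k,b}-P_h)(s'|s,a)} \lesssim \sqrt{P_h(s'|s,a)\nb\kappa/n}+\nb\kappa/n,
\]
together with a Freedman event for the martingale differences along the trajectories. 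This is where $\kappa$ and the $S^2$ in it come from.

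\textbf{Step 2 (optimism, the crux).} By backward induction on $h$, I will show $\hat{V}_h^k \le V_h^\star$. Assuming $\hat{V}_{h+1}^k \le V_{h+1}^\star$, nonnegativity of $\hat{P}_h^{k,b}$ gives $\hat{Q}_h^{k,b} \le \hat{\ell}_h^{k,b} + \hat{P}_h^{k,b}V_{h+1}^\star$ member by member. Since the quantile is monotone coordinatewise, the QoM of the right-hand side is at most $Q_h^\star(s,a)$ by the optimism event. Taking $\min_a$ then closes the induction. This is the point where using the fixed function $V^\star_{h+1}$ avoids any covering over data-dependent value functions, and it is exactly why $\nb$ is only logarithmic.

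\textbf{Step 3 (regret decomposition).} I will bound $V_1^{\pi^k}-V_1^\star \le V_1^{\pi^k}-\hat V_1^k$ and unroll along $\pi^k$. At each step, $\hat V_h^k(s_h^k)$ is the $\alpha$-quantile of the $\hat Q_h^{k,b}$, so it is at least the value at some batch $b$ in the lower part. I will write
\begin{align*}
\hat Q_h^{k,b} = \big[\hat\ell_h^{k,b}+\hat P_h^{k,b}V_{h+1}^\star\big] + \hat P_h^{k,b}(\hat V_{h+1}^k - V_{h+1}^\star).
\end{align*}
The first bracket is lower-bounded via the bias clause applied to the quantile: at least a $(1-\alpha)$ fraction of batches concentrate, so the quantile is at least the bias-lower bound. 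The second term becomes $P_h(\hat V_{h+1}^k-V_{h+1}^\star)$ plus a correction $(\hat P_h^{k,b}-P_h)(\hat V_{h+1}^k-V^\star_{h+1})$. I will handle that correction with the per-entry Bernstein event and Cauchy--Schwarz, giving $\sqrt{S\nb\kappa\,\mathrm{Var}(V^\star-\hat V)/n}+S\nb\kappa H/n$, and then use AM-GM to absorb it into a $(1+1/H)$ multiplicative factor on the next-step gap, as in \cite{azar2017minimax}. I expect a subtlety here: the quantile batch may differ for different terms. To avoid needing a single batch, I will lower-bound each piece by its worst batch among those below the $\alpha$-quantile, using uniform-over-batch events.

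\textbf{Step 4 (summation).} Unrolling gives regret $\lesssim \sum_{k,h}\big(\sqrt{\QQh\nb/n_h^k}+\sqrt{S\nb\kappa\,\mathrm{Var}_h^k/n_h^k}+H S\nb\kappa/n_h^k\big)$ plus martingale terms. Pigeonhole gives $\sum 1/\sqrt{n}\lesssim\sqrt{SAHK\cdot}$ and $\sum 1/n\lesssim SAH\log K$, with per-step Cauchy--Schwarz yielding $\sqrt{\QQ HSAK\kappa^2}$. For the $HV^\star$ alternative, I will bound $\sum_{h}\mathrm{Var}_{h}$ along trajectories by the total-variance law, which is $O(H\,V^{\pi^k}_1)$ for nonnegative losses, and then $\sum_k V^{\pi^k}\le KV^\star+\regret$, solving the resulting self-bounding quadratic. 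The $\mathrm{Var}(V^\star-\hat V)$ terms are controlled by a similar recursion, bounding them by $H$ times the regret-like gap sum, which yields only lower-order $H^3S^2A\kappa^3$ terms. I expect Step 3's correction term, and keeping it second order without covering, to be the main obstacle.
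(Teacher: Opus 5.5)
Your skeleton matches the paper's. The good event comes from \cref{lemma:quantile-of-means} applied to the fixed target $L_h + V^\star_{h+1}(s_{h+1})$, with a union bound over $(s,a,h)$ and the at most $K$ round-robin configurations. Optimism is the backward induction of \cref{lemma:mdp-optimism}. The bias is a recursion with a $(1+O(1/H))$ factor, where the data-dependent term is controlled by per-entry bounds on $\hat P_h^{k,b}$, so no covering is needed.

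Your split $\hat Q_h^{k,b} = [\hat\ell_h^{k,b} + \hat P_h^{k,b}V^\star_{h+1}] - \hat P_h^{k,b}(V^\star_{h+1} - \hat V^k_{h+1})$ is slightly leaner than the paper's four-term decomposition. Since $V^\star_{h+1} - \hat V^k_{h+1} \in [0,H]$, you only need the upper tail \cref{eq:good-phat1}, and $V^\star \le V^{\pi^k}$ then gives the $(1+O(1/H))P_h(V^{\pi^k}_{h+1} - \hat V^k_{h+1})$ recursion. Your quantile worry disappears with $q_\alpha(x_b + y_b) \ge q_\alpha(x_b) + \min_{b'} y_{b'}$, which follows from monotonicity plus shift-equivariance and is what the paper uses. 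So only the $V^\star$ bracket needs the quantile; everything else is bounded uniformly in $b$.

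The real difference is bookkeeping. You unroll along realized trajectories, so you must control the martingale terms with a variance-aware Freedman bound and self-bound their conditional variances. Azuma would cost $\sqrt{H^3K}$, which destroys the variance dependence. The paper instead unrolls the pointwise recursion in expectation under $(P,\pi^k)$. It replaces Cauchy--Schwarz by $\sqrt{\QQh\beta} \le \QQh/\eta + \eta\beta/4$ with a global free $\eta$, so everything is linear in $\beta$ and the only concentration needed is \cref{eq:good-beta}. It then closes the $HV^\star$ case per episode (with $\eta \ge 5eH^2$ and $V^{\pi^k}_1 \le V^\star + (V^{\pi^k}_1 - \hat V^k_1)$), not through a global self-bounding quadratic.

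The one step that does not go through as written is the $HV^\star$ branch. The bias clause \cref{eq:good-bias} produces $\Var_{s_h,a_h,h}(L_h + V^\star_{h+1}(s_{h+1}))$ along trajectories of $\pi^k$. The total-variance law (\cref{lemma:total-variance,lemma:variance-bound}) is an identity only for $V^{\pi}$ along trajectories of the same $\pi$, so it says nothing directly about $V^\star$ under $\pi^k$, and you need a conversion step.

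The paper's conversion is \cref{lemma:var-star-to-pi}. It combines the triangle inequality for standard deviations with $\Var((V^{\pi^k}_{h+1} - V^\star_{h+1})(s_{h+1})) \le H P_h(V^{\pi^k}_{h+1} - \hat V^k_{h+1})$, which uses optimism. This bounds the $V^\star$-variance bonus by $\eta^{-1}\Var(L_h + V^{\pi^k}_{h+1}(s_{h+1})) + \frac{1}{3H}P_h(V^{\pi^k}_{h+1}-\hat V^k_{h+1}) + O((H^2+\eta)\beta)$. The middle term is absorbed into the recursion's multiplicative factor, after which \cref{lemma:variance-bound} applies.

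Alternatively, you can prove your claimed bound directly: $\mathbb{E}_{P,\pi}\sum_h \Var_{s_h,a_h,h}(L_h + V^\star_{h+1}(s_{h+1})) \le 8HV^\pi_1(s_1)$. Write the sum of martingale increments as $\sum_h L_h - V^\star_1(s_1) - G$, where $G=\sum_h g_h$ and $g_h = [\ell_h + P_hV^\star_{h+1}](s_h,a_h) - V^\star_h(s_h) \ge 0$. The performance-difference lemma gives $\mathbb{E}\, G^2 \le 3H(V^\pi_1 - V^\star_1)$. With either fix, your self-bounding argument goes through.
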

% Crucially, the leading term of this bound matches the $\Omega(\sqrt{H S A T})$ minimax lower bound for episodic tabular MDPs (e.g., \cite{domingues21a}) up to logarithmic factors, while adapting strictly to the problem-dependent variance $\mathcal{Q}^*$ when the environment is near-deterministic.
Crucially, the leading term of this bound establishes the optimality of our ensemble approach in two distinct regimes (up to logarithmic factors). In the worst case, it matches the standard $\Omega(\sqrt{H^3 S A K})$ minimax lower bound of \citep{domingues21a}. Moreover, in environments with low stochasticity, the bound scales with the problem-dependent variance $\mathcal{Q}^*$, matching the refined variance-dependent lower bounds established by \cite{zhou2023sharp}. 
% Crucially, the leading term of this bound establishes the optimality of our ensemble approach in two distinct regimes. In the worst case, it matches the standard $\Omega(\sqrt{H^3 S A K})$ minimax lower bound for episodic tabular MDPs \citep{domingues21a} up to logarithmic factors. Moreover, in environments with low stochasticity, the bound scales strictly with the problem-dependent variance $\mathcal{Q}^*$, matching the refined variance-dependent lower bounds established by \cite{zhou2023sharp}.

\section{Ensembles as a Tool for Optimism}
\label{sec:optimism}
Having established VIBE's theoretical guarantees and algorithmic simplicity, we now dissect the mathematical engine driving these results. The purpose of this section is to formally demonstrate how our ensemble quantile mechanism naturally induces optimism in the MDP setting, bypassing the need for explicit uncertainty bonuses.
To see this, we first decompose the cumulative regret into two standard components—colloquially referred to as bias and optimism:
\begin{align}
\label{eq:regret-decomp}
    \regret
    =
    \sum_{k \in [K]} 
    \underbrace{
    V_1^{\pi^k}(s_1) - \hat{V}_1^k(s_1)
    }_{\text{bias}}
    +
    \sum_{k \in [K]} 
    \underbrace{
    \hat{V}_1^k(s_1) - V_1^\star(s_1)
    }_{\text{optimism}}
    .
\end{align}
The primary goal of optimistic algorithms is to ensure that the $\text{optimism}$ term is non-positive , and thus the total regret is bounded entirely by the bias term. Traditionally, bounding the optimism term requires carefully designed count-based bonuses, which then become the dominant factor when analyzing the bias. Because VIBE completely forgoes explicit bonuses, the analysis of our bias term is significantly simplified and relies mostly on standard techniques (detailed in \cref{sec:proof-mdp-bias}).
% The former is usually achieved using carefully designed, count-based bonuses, which also become the dominant term in the bias.
%
% As we forgo the use of bonuses, the analysis of the bias term (\cref{sec:proof-mdp-bias}), which uses mostly standard techniques, is also significantly simplified.
In what follows, we show that our QoM approach achieves optimism on the following high-probability event.

% Building on this result, we show that our bootstrap ensemble estimator is optimistic with high probability.
% \begin{lemma}
% \label{lemma:bootstrap-ensemble}
%     Let $X_1, \ldots, X_n$ be i.i.d nonnegative random variables, with expectations $\mu$. Let $\D^b, b \in [\nb]$ be fixed disjoint subsets of $\brk[c]{X_1, \ldots, X_n}$.
%     Then for $\nb \ge (25/2) \log \delta^{-1}$ and $c \ge 1/12$, with probability at least $1-\delta$
%     \begin{align*}
%         \min_{b \in [\nb]} \sum_{X \in \D^b} \frac{X}{\abs{\D^b} + c}
%         \le
%         \mu
%         .
%     \end{align*}
% \end{lemma}
% \begin{proof}
%     Because the subsets $\D^b$ are fixed and disjoint, they are independent, thus
%     \begin{align*}
%         \Pr
%         % &
%         \brk*{
%         \min_{b \in [\nb]}
%          \sum_{X \in \D^b} \frac{X}{\abs{\D^b} + c}
%         \le
%         \mu
%         }
%         % \\
%         &
%         =
%         1
%         -
%         \prod_{b \in [\nb]}
%         \Pr
%         \brk*{
%          \sum_{X \in \D^b} \frac{X}{\abs{\D^b} + c}
%         >
%         \mu
%         }
%         \\
%         \tag{\cref{corollary:feige}}
%         &
%         \ge
%         1 - (12/13)^b
%         \\
%         &
%         \ge
%         1 - \delta
%         .
%         \qedhere
%     \end{align*}
% \end{proof}

% The following Lemma incorporates \cref{lemma:quantile-of-means} into part of the good event that implies optimism.
\begin{lemma}[Good event for optimism]
\label{lemma:good-optimism}
    Suppose that $\alpha = 1/65$ and $\nb \ge 26\log (5 S H K \delta^{-1})$, then with probability at least $1 - \delta/5$ simultaneously for all $h \in [H], s \in \X, k \in [K]$.
    \begin{align*}
        \quantile\brk*{
        \brk[s]{\hat{\ell}_h^{k,b} + \hat{P}_h^{k,b} V_{h+1}^\star}(s, \pi_h^\star(s))
        ,
        b \in [\nb]
        }
        \le
        \brk[s]{\ell_h + P_h V^\star_{h+1}}(s, \pi_h^\star(s))
        .
    \end{align*}
    % \begin{align*}
    %     \min_{b \in [\nb]} \frac{1}{\abs{\D_h^{k,b}(s,a^\star(s))} + 1} \sum_{\ell, s' \in \D_h^{k,b}(s,a^\star(s))} \brk[s]{\ell + V^\star_{h+1}(s')}
    %     \le
    %     \ell_h(s, a^\star(s)) + P_h V^\star_{h+1}(s, a^\star(s))
    %     .
    % \end{align*}
\end{lemma}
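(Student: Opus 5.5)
The plan is to reduce the statement to the optimism part of \cref{lemma:quantile-of-means}, applied for each fixed $(h,s)$ and each fixed per-batch sample size, and then take a union bound. Fix $h \in [H]$ and $s \in \X$, and write $a = \pi_h^\star(s)$. For each sample $(s', L)$ collected at $(s,a,h)$, define $Y = L + V_{h+1}^\star(s')$. This quantity is nonnegative, since losses are nonnegative and hence $V^\star \ge 0$. Its mean is $\brk[s]{\ell_h + P_h V^\star_{h+1}}(s,a)$. Moreover, because $V_{h+1}^\star$ is a fixed deterministic function that does not depend on the data, the quantity $\brk[s]{\hat{\ell}_h^{k,b} + \hat{P}_h^{k,b} V_{h+1}^\star}(s,a)$ equals exactly $\sum_{Y \in \D^{k,b}_h(s,a)} Y/(\abs{\D_h^{k,b}(s,a)}+1)$. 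So the left-hand side of the statement is precisely the QoM estimator \cref{eq:quantile-of-means} applied to the $Y$'s. This is the crucial reason the lemma is stated with $V^\star$ rather than $\hat V$: it removes every dependence on the algorithm's own estimates.

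The main obstacle is that the samples are collected adaptively, since whether $(s,a)$ is visited depends on the past. The $Y$'s therefore do not literally form a fixed i.i.d. sample partitioned into fixed subsets. I would handle this with the standard ``stack of samples'' (skeleton) argument. Before interaction, draw for each $(h,s,a)$ an infinite i.i.d. sequence of pairs $(s'_i, L_i)$ from $P_h(\cdot\mid s,a)\times\LL_h(s,a)$, and let the $i$-th visit to $(s,a,h)$ consume the $i$-th pair. This yields the same joint law of the process. Under the round-robin rule \cref{eq:round-robin}, the number of visits $n$ determines the batch membership deterministically: the $i$-th sample goes to batch $((i-1) \bmod \nb) + 1$, assuming ties in the argmin are broken by lowest index. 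Consequently, for each fixed $n \in \{0,\dots,K\}$, the datasets are fixed disjoint subsets of the first $n$ i.i.d. stacked variables.

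For each fixed $n$ we apply the optimism part of \cref{lemma:quantile-of-means} with confidence $\delta' = \delta/(5SH(K+1))$. That part uses only nonnegativity, through \cref{corollary:feige}; boundedness is not needed for optimism. The case $n=0$ is trivial because every batch estimate is $0 \le \mu$. The requirement $\nb \ge 26\log(1/\delta')$ is met, up to the $K$ versus $K+1$ slack, which I would absorb either by using $n \in [K]$ with $n = 0$ handled deterministically, or by noting that the batch count is at most $K$. The event for the actual episode $k$ is then implied by the event for $n = N_h^k(s,a)$.

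Finally, take a union bound over $h \in [H]$, $s \in \X$ (with $a$ determined by $\pi^\star$, which is why no factor $A$ appears), and $n \le K$. This gives total failure probability at most $SHK\delta' \le \delta/5$, and the event holds simultaneously for all $k$.
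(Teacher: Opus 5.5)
Your proposal is correct and follows essentially the same route as the paper: you rewrite the left-hand side as the QoM estimator of $L + V^\star_{h+1}(s')$, invoke the optimism part of \cref{lemma:quantile-of-means}, and union bound over $h$, $s$, and the at most $K$ round-robin configurations. Your stack-of-samples argument rigorously justifies the paper's one-line claim that the batches are disjoint i.i.d. samples. The $K$ versus $K+1$ worry does not arise, since $N_h^k(s,a) \le k-1 \le K-1$ leaves exactly $K$ possible counts.
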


\begin{proof} 
    First, plugging in the definitions of $\hat{\ell}$ and $\hat{P}$ (see \cref{alg:vibe}), we have
    $$
    \brk[s]{
    \hat{\ell}_h^{k,b} + \hat{P}_h^{k,b} V_{h+1}^\star
    }(s,a)
    =
    \sum_{L, s' \in \D_h^{k,b}(s,a)} \frac{L + V^\star_{h+1}(s')}{\abs{\D_h^{k,b}(s,a)} + 1}
    .
    $$
    Moreover, the datasets $\D_h^{k,b}(s,a)$ are disjoint and each contain i.i.d. samples that satisfy $\EE\brk[s]{L + V_{h+1}^\star(s')} = \brk[s]{\ell_h + P_h V_{h+1}^\star}(s,a)$. Thus, the proof is concluded by performing a union bound on \cref{lemma:quantile-of-means} with respect to $h \in [H], s \in \X$, and all valid configurations of dataset sizes $\abs{\D_h^{k,b}(s, \pi^\star(s))}, b \in [\nb]$. Because we use a round-robin schedule\footnote{There is a one-to-one mapping from the total number of samples, which is at most $K$, to the current configuration.}, there are at most $K$ valid configurations, and thus the total number of events is bounded by $S H K$, as desired.
\end{proof}

\begin{lemma}[Optimism]
\label{lemma:mdp-optimism}
    Conditioning on the good event defined in \cref{lemma:good-optimism}, we have $\hat{V}^k_h(s) \le V_h^\star(s)$ for all $s \in \X, h \in [H], k \in [K]$.
\end{lemma}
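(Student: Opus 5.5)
We argue by backward induction on $h$ for a fixed episode $k$. The base case $h = H+1$ is immediate, since $\hat{V}_{H+1}^k = 0 = V_{H+1}^\star$. For the inductive step, assume $\hat{V}_{h+1}^k(s') \le V_{h+1}^\star(s')$ for all $s' \in \X$, and fix $s \in \X$ with $a^\star = \pi_h^\star(s)$. Because $\hat{V}_h^k(s)$ is a minimum over actions, we may restrict to $a^\star$:
\begin{align*}
    \hat{V}_h^k(s)
    \le
    \quantile\brk*{\hat{Q}_h^{k,b}(s,a^\star), b \in [\nb]}
    =
    \quantile\brk*{\brk[s]{\hat{\ell}_h^{k,b} + \hat{P}_h^{k,b}\hat{V}_{h+1}^k}(s,a^\star), b \in [\nb]}
    .
\end{align*}

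The central step replaces the empirical next-step value with the optimal one. Every $\hat{P}_h^{k,b}(\cdot \mid s,a^\star)$ has nonnegative entries (their sum is $\abs{\D}/(\abs{\D}+1) \le 1$, though only nonnegativity is needed). Hence the induction hypothesis gives, for each $b$ separately, $\hat{P}_h^{k,b}\hat{V}_{h+1}^k(s,a^\star) \le \hat{P}_h^{k,b}V_{h+1}^\star(s,a^\star)$, and so $\hat{Q}_h^{k,b}(s,a^\star) \le \brk[s]{\hat{\ell}_h^{k,b} + \hat{P}_h^{k,b}V_{h+1}^\star}(s,a^\star)$ for every $b \in [\nb]$. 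The quantile of \cref{eq:quantile-def} is an order statistic, so it is monotone under coordinatewise domination. The reason is that the $j$-th smallest of a sequence is at most the $j$-th smallest of any coordinatewise larger sequence. I would state this as a one-line observation.

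Applying this monotonicity with $j = \ceil{\alpha\nb}$ and then the good event of \cref{lemma:good-optimism} at $(h,s,k)$ gives
\begin{align*}
    \hat{V}_h^k(s)
    \le
    \quantile\brk*{\brk[s]{\hat{\ell}_h^{k,b} + \hat{P}_h^{k,b}V_{h+1}^\star}(s,a^\star), b \in [\nb]}
    \le
    \brk[s]{\ell_h + P_hV_{h+1}^\star}(s,a^\star)
    =
    V_h^\star(s)
    ,
\end{align*}
where the final equality is the Bellman optimality equation for $\pi^\star$. This completes the induction.

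The only step requiring care is the swap of $\hat{V}_{h+1}^k$ for $V_{h+1}^\star$ inside the quantile. It needs two facts: nonnegativity of the shrunk empirical transition weights, and monotonicity of order statistics. Together these let the good event be stated in terms of the fixed function $V_{h+1}^\star$, avoiding any covering over data-dependent value functions. No other step appears to be an obstacle.
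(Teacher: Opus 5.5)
Your proof is correct and matches the paper's argument step for step: reverse induction from $h=H+1$, restricting the minimum to $\pi_h^\star(s)$, replacing $\hat{V}_{h+1}^k$ by $V_{h+1}^\star$ via the induction hypothesis and pointwise monotonicity of the quantile, and then applying the good event of \cref{lemma:good-optimism} and the Bellman equation. You also state explicitly that the nonnegativity of $\hat{P}_h^{k,b}$ is what justifies the swap, which the paper leaves implicit.
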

\begin{proof}
    We prove the claim by reverse induction on $h \in [H+1]$. The base case is satisfied because $\hat{V}_{H+1}^k = V_{H+1}^\star = 0$. Now, assuming the hypothesis holds for $h+1$, we have
    \begin{align*}
        \hat{V}^k_h(s)
        &
        =
        \min_{a \in \A} \quantile\brk*{
        \hat{Q}_h^{k,b}(s,a)
        ,
        b \in [\nb]
        }
        \\
        &
        \le
        \quantile\brk*{
        \hat{Q}_h^{k,b}(s,\pi^\star(s))
        ,
        b \in [\nb]
        }
        \\
        &
        =
        \quantile\brk*{
        \brk[s]{\hat{\ell}_h^{k,b} + \hat{P}_h^{k,b} \hat{V}_{h+1}^k}(s, \pi_h^\star(s))
        ,
        b \in [\nb]
        }
        \\
        \tag{induction hypothesis}
        &
        \le
        \quantile\brk*{
        \brk[s]{\hat{\ell}_h^{k,b} + \hat{P}_h^{k,b} V_{h+1}^\star}(s, \pi_h^\star(s))
        ,
        b \in [\nb]
        }
        \\
        \tag{Good event - \cref{lemma:good-optimism}}
        &
        \le
        \brk[s]{\ell_h + P_h V^\star_{h+1}}(s, \pi_h^\star(s))
        \\
        \tag{Bellman optimality equation}
        &
        =
        V^\star_h(s)
        ,
    \end{align*}
    where the second and third inequalities also used that $\quantile(x) \le \quantile(y)$ for $x \le y$ (point-wise).
\end{proof}

\section{Regret Analysis}
\label{sec:proof-vibe}

As established in \cref{sec:optimism}, our ensemble approach ensures the optimism term in the regret decomposition is non-positive. Consequently, the regret of VIBE is bounded entirely by the bias.
To formalize this, we first define our empirical counters and operators.
Let $N_h^k(s,a) = \sum_{k' \in [k-1]} \indEvent{(s,a) = (s_h^{k'}, a_h^{k'})}$ be the total visitation count. Because the data is distributed across $B$ batches, the effective inverse count per batch is given by $\beta_h^k(s,a)=\nb/{\max\brk[c]{1, N_h^k(s,a)}}$.
Let $\hat{\mathcal{T}}^{k}_h$ denote the empirical QoM Bellman operator:
\begin{align*}
    \hat{\mathcal{T}}^{k}_h V(s,a)
    =
    \quantile \brk*{\brk[s]{
    \hat{\ell}_h^{k,b} + \hat{P}_h^{k,b} V
    }(s, a)
    ,
    b \in [\nb]
    }
    .
\end{align*}

% We show that, conditioned on the following so-called ``good'' event, the regret of \cref{alg:vibe} is deterministically bounded. 
% %
% To ease notations, we define
% \begin{gather}
% \label{eq:def-beta}
% \begin{aligned}
%     N_h^k(s,a) = \sum_{k' \in [k-1]} \indEvent{(s,a) = (s_h^{k'}, a_h^{k'})}
%     ,
%     \quad
%     \beta_h^k(s,a)
%     =
%     \nb/{\max\brk[c]{1, N_h^k(s,a)}}
%     ,
%     % \quad
%     \\
%     \kappa 
%     =
%     \log 20 H S^2 A K \delta^{-1}
%     ,
%     \quad
%     \hat{\mathcal{T}}^{k}_h V(s,a)
%     =
%     \quantile \brk*{\brk[s]{
%     \hat{\ell}_h^{k,b} + \hat{P}_h^{k,b} V
%     }(s, a)
%     ,
%     b \in [\nb]
%     }
%     ,
% \end{aligned}
% \end{gather}
% where $N_h^k(s,a)$ is the total number of samples associated with the tuple $(s,a,h)$ up to (not including) episode $k$, and $\hat{\mathcal{T}}^{k}_h$ is a QoM estimate of the Bellman operator.
%
%
We define the good event $\goodEvent$ as the intersection of several high-probability inequalities that hold simultaneously for all $s,s' \in \mathcal{S}, a \in \mathcal{A}, h \in [H], k \in [K]$, and $b \in [B]$:
\begin{alignat}{2}
    \label{eq:good-optimism-mdp}
    &
    \hat{\mathcal{T}}^{k}_h V_{h+1}^\star(s,\pi_h^\star(s))
    % \quantile \brk*{\brk[s]{
    % \hat{\ell}_h^{k,b} + \hat{P}_h^{k,b} V_{h+1}^\star
    % }(s, \pi_h^\star(s))
    % ,
    % b \in [\nb]
    % }
    &&
    \le
    \brk[s]{\ell_h + P_h V_{h+1}^\star}(s, \pi_h^\star(s))
    \\
    \label{eq:good-bias}
    &
    \hat{\mathcal{T}}^{k}_h V_{h+1}^\star(s, a)
    % \quantile\brk*{
    % \brk[s]{(\ell_h - \hat{\ell}_h^{k,b}) + (P_h - \hat{P}_h^{k,b}) V_{h+1}^{\star}
    % }(s,a)
    % ,
    % b \in \nb
    % }
    &&
    \ge
    \brk[s]{\ell_h + P_h V_{h+1}^\star}(s, a)
    -
    1.7\sqrt{\beta_h^k(s,a) \Var_{s,a,h}(L_h + V_{h+1}^\star(s_{h+1}))}
    -
    9 H \beta_h^k(s,a)
    % \\
    % &
    % \hat{\mathcal{T}}^{k}_h V_{h+1}^\star(s, a)
    % &&
    % \ge
    % \brk[s]{\ell_h + P_h V_{h+1}^\star}(s, a)
    % -
    % 0.85 \eta^{-1} 
    % \Var_{s,a}(L_h + V_{h+1}^\star(s'))
    % -
    % (0.85 \eta + 9 H) \beta_h^k(s,a)
    \\
    \label{eq:good-phat1}
    &
    \hat{P}_h^{k,b}(s' \mid s,a)
    &&
    \le
    \brk*{1 + \frac{1}{3H}} P_h(s' \mid s,a)
    +
    % 3(e-2)
    2.2 H \kappa \beta_h^k(s,a)
    \\
    \label{eq:good-phat2}
    &
    \hat{P}_h^{k,b}(s' \mid s,a)
    &&
    \ge
    \brk*{1 - \frac{1}{3H}} P_h(s' \mid s,a)
    -
    % 3(e-2)
    2.2 H \kappa \beta_h^k(s,a)
    \\
    \label{eq:good-beta}
    &
    \sum_{k \in [K]} \EE[\pi^k, P] \sum_{h \in [H]} &&\beta_h^k(s_h,a_h)
    % &&
    \le
    % 2 \sum_{k \in [K]} \sum_{h \in [H]} \beta_h^k(s_h^k, a_h^k)
    % +
    2 \nb H S A \kappa
    .
\end{alignat}
These conditions capture the essential statistical properties needed to bound the regret. Equation~\eqref{eq:good-optimism-mdp} establishes optimism, ensuring the QoM estimator does not overestimate the loss of the optimal action. Equation~\eqref{eq:good-bias} bounds the algorithm's bias by limiting how much the estimator can underestimate the true Bellman target for any action. Equations~\eqref{eq:good-phat1} and \eqref{eq:good-phat2} ensure the empirical transition probabilities concentrate around the true dynamics. Notice that the error here scales multiplicatively with the true transition probabilities, which is a strict requirement for achieving variance-dependent bounds. Finally, Equation~\eqref{eq:good-beta} bounds the cumulative sum of the statistical errors (the inverse counts) encountered by the agent across all episodes.

The following two lemmas establish that $\mathcal{E}$ holds with high probability, and that under $\mathcal{E}$, the cumulative bias is bounded by the problem's structural variance (proofs in \cref{sec:proof-mdp-bias,sec:proof-good-event}).
\begin{lemma}[Good Event]
\label{lemma:good-event}
    If $\alpha = 1/65$ and $\nb \ge 26 \log (5 S A H K \delta^{-1})$ then
    $\Pr(\goodEvent) \ge 1 - \delta.$
\end{lemma}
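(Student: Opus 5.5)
We split the good event $\goodEvent$ into five sub-events, one for each of \cref{eq:good-optimism-mdp,eq:good-bias,eq:good-phat1,eq:good-phat2,eq:good-beta}. We show each fails with probability at most $\delta/5$ and finish with a union bound.

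\textbf{Optimism and bias.} \Cref{eq:good-optimism-mdp} is exactly \cref{lemma:good-optimism}, since $\nb \ge 26\log(5SAHK\delta^{-1}) \ge 26\log(5SHK\delta^{-1})$; it holds with probability $1-\delta/5$.

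For \cref{eq:good-bias} we use the same reduction, now applied to every action $a$. For a fixed $(s,a,h)$ and a fixed configuration of dataset sizes, the samples $X = L + V^\star_{h+1}(s')$ satisfy the following:
\begin{itemize}
\item they are i.i.d.\ and nonnegative;
\item they are bounded by $H$, by \cref{assumption:bounded-total-loss} and $V^\star_{h+1}\le H-h$ (more precisely $L_h + V^\star_{h+1} \le H$);
\item they have mean $[\ell_h+P_hV^\star_{h+1}](s,a)$ and variance $\Var_{s,a,h}(L_h+V^\star_{h+1}(s_{h+1}))$.
\end{itemize}
Round-robin guarantees $\abs{\D^{k,b}_h(s,a)}\ge\floor{N_h^k(s,a)/\nb}$, so the bias part of \cref{lemma:quantile-of-means} applies with $R=H$ and $n=N_h^k(s,a)$. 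That gives exactly $1.7\sqrt{\beta\Var}+9H\beta$. We then union bound over $s,a,h$ and the at most $K$ configurations, for a total of $SAHK$ events. With $\nb\ge 26\log(5SAHK\delta^{-1})$, each event fails with probability $\delta/(5SAHK)$.

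One subtlety: the data are collected adaptively. We handle it in the standard way, by fixing in advance an i.i.d.\ stream of samples for each $(s,a,h)$ (a ``stack of samples'' argument). The $j$-th visit then consumes the $j$-th sample, and round-robin assigns it to a deterministic batch. Each configuration therefore corresponds to fixed, disjoint subsets of i.i.d.\ variables.

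\textbf{Transition concentration.} For \cref{eq:good-phat1,eq:good-phat2} we use the same stack argument, but per batch $b$ and per $s'$. Write $n_b=\abs{\D^{k,b}_h(s,a)}$. The indicators $\indEvent{s_+=s'}$ are Bernoulli$(p)$ with variance at most $p$. Freedman/Bernstein (\cref{lemma:freedman}) gives, with probability $1-\delta'$,
\[
\abs{\hat p - p}\le \sqrt{2p\log(2/\delta')/n_b}+O(\log(1/\delta')/n_b).
\]
By AM-GM, $\sqrt{2p\,x/n_b}\le p/(3H)+1.5Hx/n_b$. The extra $+1$ in the denominator costs only a further $-p/(n_b+1)\cdot$ term, which is absorbed the same way. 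Using $n_b\gtrsim N/\nb$, i.e.\ $1/n_b\le 2\beta$ whenever $N\ge \nb$, the additive term becomes $2.2H\kappa\beta$. When $N<\nb$ we have $\beta>1$ and the bound is trivial.

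The union bound runs over $s,s',a,h,b$ and the sample count ($\le K$), with $\delta'=\delta/(20S^2AHK\nb)$. This is where $\kappa=\log(20HS^2AK\delta^{-1})$ comes from; the factor $\nb$ has to be absorbed, possibly by slightly adjusting constants. The main obstacle is getting the stated constants $2.2$ and $1/(3H)$ to come out cleanly.

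\textbf{Cumulative inverse counts.} For \cref{eq:good-beta}, define $Y_k=\sum_h\beta_h^k(s_h^k,a_h^k)$. Its conditional expectation given the past is $\EE[\pi^k,P]\sum_h\beta_h^k$, and $0\le Y_k\le H\nb$. The standard multiplicative martingale bound (Freedman, or \cref{lemma:freedman}'s consequence $\sum\EE_k[Y_k]\le 2\sum Y_k+4H\nb\log(5/\delta)$) applies, with probability $1-\delta/5$. The deterministic pigeonhole bound is
\[
\sum_k Y_k \le \sum_{s,a,h}\brk*{\nb+\sum_{n=1}^{K}\nb/n}\le \nb HSA(1+\log K).
\]
Combining the two gives at most $2\nb HSA\kappa$ up to constant bookkeeping, where $\kappa$ absorbs $1+\log K$ and the additive $\log(5/\delta)$ term. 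A final union bound over the five sub-events yields $\Pr(\goodEvent)\ge1-\delta$.
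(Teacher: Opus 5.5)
Your overall structure matches the paper's proof. You split $\goodEvent$ into five sub-events at $\delta/5$ each. You read off \cref{eq:good-optimism-mdp} from \cref{lemma:good-optimism}. You obtain \cref{eq:good-bias} from the bias half of \cref{lemma:quantile-of-means} with $R=H$ and a union bound over $SAHK$ cases. You bound \cref{eq:good-beta} via \cref{lemma:multiplicative-concentration} plus the harmonic-sum bound. Your stack-of-samples construction also makes explicit the adaptivity argument that the paper leaves implicit.

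The gap is in \cref{eq:good-phat1,eq:good-phat2}, which you flag but do not close. As written, your argument yields an additive term of order $(3H+O(1))(\kappa+\log 2\nb)\,\beta_h^k(s,a)$ rather than $2.2H\kappa\,\beta_h^k(s,a)$, so the event in the statement is not established. Two steps cause this.

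First, you union over $b\in[\nb]$ and the total count $N\le K$. That is $K\nb$ cases per $(s',s,a,h)$, and the resulting $\log\nb$ is not covered by $\kappa$; it could only be paid for by enlarging the constant $2.2$. The missing observation is that, for fixed $(s,a,h)$, the round-robin schedule makes each new sample create exactly one new pair $(b,\abs{\D_h^{k,b}(s,a)})$. So only $K$ (batch, batch-size) pairs ever occur. In your stack construction each such pair is a fixed set of stack positions. A union over these pairs therefore costs $S^2AHK$ events per side, giving $\delta'=\delta/(5S^2AHK)$ and $\log(e/\delta')\le\kappa$ because $5e<20$.

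Second, the bound $1/n_b\le 2\beta_h^k(s,a)$ throws away a factor $2$. Since $\hat P_h^{k,b}$ already divides by $n_b+1$, use \cref{eq:counts-bound}, $n_b+1\ge\max\{1,N_h^k(s,a)\}/\nb$. This gives $1/(n_b+1)\le\beta_h^k(s,a)$ exactly and removes the case split at $N<\nb$.

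With these two fixes, your route fits if you apply Bernstein one-sided per direction, with each direction getting its own $\delta/5$. The additive term becomes about $(1.5H+2/3)\log(e/\delta')\,\beta_h^k(s,a)\le 2.2H\kappa\,\beta_h^k(s,a)$. The $\log e$ absorbs the correction $P_h(s'\mid s,a)/(n_b+1)\le 1/(n_b+1)$ on the lower side.

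The paper avoids AM-GM altogether. It applies \cref{lemma:freedman} to $X_t=\pm(\indEvent{s_+=s'}-P_h(s'\mid s,a))$ with $R=1$, conditional second moment at most $P_h(s'\mid s,a)$, and $\lambda=1/(3(e-2)H)$. This produces the factor $1\pm\frac{1}{3H}$ directly, with additive term $3(e-2)H\log(1/\delta')$ and $3(e-2)\approx 2.155<2.2$.

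Two smaller bookkeeping points concern \cref{eq:good-beta}. Including the $N=0$ term, the per-$(s,a,h)$ harmonic sum is at most $2+\log K\le\log 8K$, not $1+\log K$. Also, absorbing the additive $4H\nb\log(5/\delta)$ into $2\nb HSA\kappa$ needs $SA\ge 2$. The paper invokes $A\ge 2$, since the regret is zero otherwise, so this step is not automatic in the degenerate case.
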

% The following Lemma bounds the bias term (see \cref{eq:regret-decomp}). Although it is the most technical part of the proof, the lack of bonuses simplifies it significantly compared to past results. See proof in \cref{sec:proof-mdp-bias}.
\begin{lemma}[Bias]
\label{lemma:mdp-bias}
    Suppose that the good event (\crefrange{eq:good-optimism-mdp}{eq:good-beta}) holds, then for any $\eta > 0$
    \begin{align*}
        V^{\pi^k}&(s_1) - \hat{V}^k(s_1)
        % \\
        % &
        \le
        \frac{\min\brk[c]{\QQ, H V^\star}}{\eta}
        +
        (8.5\eta + 37 H^2 S \kappa)
        \EE[P, \pi^k]\brk[s]*{
        \sum_{h \in [H]} \beta_h^k(s_h, a_h)
        }
        .
    \end{align*}
\end{lemma}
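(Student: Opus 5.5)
We bound the per-episode bias $V_1^{\pi^k}(s_1)-\hat V_1^k(s_1)$ by a recursive, value-difference argument along trajectories of $\pi^k$. At every step the empirical QoM backup at $\hat V^k_{h+1}$ is replaced by the backup at $V^\star_{h+1}$, plus a correction controlled by the multiplicative concentration \eqref{eq:good-phat1}--\eqref{eq:good-phat2}. Write $a=\pi^k_h(s)$ and $\Delta_h(s)=V_h^{\pi^k}(s)-\hat V_h^k(s)$. By definition of $\pi^k$, $\hat V_h^k(s)=\hat{\mathcal T}_h^k\hat V_{h+1}^k(s,a)$.

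\textbf{Step 1: one-step decomposition.} The quantile at level $\alpha$ of a sum is at least the quantile of the first summand plus the minimum of the second. Hence
\begin{align*}
\hat{\mathcal T}_h^k\hat V_{h+1}^k(s,a)\ge \hat{\mathcal T}_h^k V^\star_{h+1}(s,a)-\max_{b}\hat P_h^{k,b}(V^\star_{h+1}-\hat V^k_{h+1})(s,a).
\end{align*}
By \cref{lemma:mdp-optimism}, $V^\star_{h+1}-\hat V^k_{h+1}\ge 0$. Applying \eqref{eq:good-phat1} coordinatewise and using $0\le V^\star-\hat V^k\le H$ gives
\begin{align*}
\hat P_h^{k,b}(V^\star_{h+1}-\hat V^k_{h+1})\le (1+\tfrac1{3H})P_h(V^\star_{h+1}-\hat V^k_{h+1})+2.2H^2S\kappa\beta_h^k(s,a).
\end{align*}
Combining this with \eqref{eq:good-bias} yields
\begin{align*}
\Delta_h(s)\le P_h(V^{\pi^k}_{h+1}-V^\star_{h+1})+(1+\tfrac1{3H})P_h(V^\star_{h+1}-\hat V^k_{h+1})+1.7\sqrt{\beta\,\Var}+(9H+2.2H^2S\kappa)\beta.
\end{align*}
Since $V^{\pi^k}\ge V^\star\ge\hat V^k$, the first two terms together are at most $(1+\frac1{3H})P_h\Delta_{h+1}(s,a)$. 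Unrolling over $h$ multiplies each term by at most $(1+\frac1{3H})^H\le e^{1/3}$. This gives
\begin{align*}
\Delta_1(s_1)\le e^{1/3}\,\EE[P,\pi^k]\Bigl[\sum_h 1.7\sqrt{\beta_h^k\Var_{s_h,a_h,h}(L_h+V^\star_{h+1})}+(9H+2.2H^2S\kappa)\beta_h^k\Bigr].
\end{align*}
Using \eqref{eq:good-beta} is not needed here, since the lemma keeps the expectation of $\sum\beta$.

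\textbf{Step 2: AM--GM.} Bound $1.7e^{1/3}\sqrt{\beta\Var}\le \Var/(\eta')+c\,\eta'\beta$. The constants must be tuned so that the $\beta$ coefficient becomes $8.5\eta$ and the variance coefficient becomes $1/\eta$, after the bound in Step 3.

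\textbf{Step 3: variance term, the main obstacle.} We need
\begin{align*}
\EE[P,\pi^k]\Bigl[\sum_h\Var_{s_h,a_h,h}(L_h+V^\star_{h+1}(s_{h+1}))\Bigr]\lesssim\min\{\QQ,HV^\star\}+\text{lower order}.
\end{align*}
The bound by $\QQ$ is immediate from the definition of $\QQh$ as a maximum. The bound by $HV^\star$ is harder, since trajectories follow $\pi^k$ rather than $\pi^\star$. First, $\Var\le\EE[(L_h+V^\star_{h+1})^2]\le H\,\EE[L_h+V^\star_{h+1}(s_{h+1})]$, because $L_h+V^\star_{h+1}\le H$ under \cref{assumption:bounded-total-loss}. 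Next, write $\ell_h+P_hV^\star_{h+1}=Q^\star_h(s,a)$ and telescope along $\pi^k$:
\begin{align*}
\EE[\pi^k]\sum_h\bigl(Q^\star_h(s_h,a_h)-V^\star_{h+1}(s_{h+1})\bigr)\cdot\ldots
\end{align*}
The goal is to relate this to $H\cdot V^{\pi^k}_1(s_1)\le H(V^\star+\Delta_1)$, using a law-of-total-variance type identity $\sum_h\Var\le$ variance of the total loss plus cross terms. This produces an extra $H\Delta_1/\eta$, which we absorb by taking $\eta$ large relative to $H$, or by a self-bounding rearrangement. I expect this self-bounding step, and keeping the constants at $8.5$ and $37$, to be the delicate part. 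If it fails cleanly, I would fall back on a cruder bound $\Var\le H\,Q^\star_h$ and sum $\EE[\pi^k]\sum_h(Q^\star_h-V^\star_h)$ as gaps, which telescope against $V^{\pi^k}-V^\star$.

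Finally, collect $9H+2.2H^2S\kappa$ (times $e^{1/3}$) together with the AM--GM remainders into $37H^2S\kappa$.
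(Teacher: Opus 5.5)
Your Steps~1--2 are correct, and they take a slightly different route from the paper for the recursion. You lower-bound the quantile of a sum by $\hat{\mathcal T}^k_h V^\star_{h+1}$ minus $\max_b \hat P^{k,b}_h(V^\star_{h+1}-\hat V^k_{h+1})$. This needs only \eqref{eq:good-phat1} and gives the factor $1+\frac{1}{3H}$. The paper instead splits each batch's error using both $V^\star_{h+1}$ and $V^{\pi^k}_{h+1}$, and needs both \eqref{eq:good-phat1} and \eqref{eq:good-phat2}. Together with $\Var\le\QQh$ and AM--GM, your argument fully proves the $\QQ$ branch, with better constants than the lemma states. Your self-bounding idea ($HV^{\pi^k}_1\le HV^\star+H\Delta_1$, absorbed by taking $\eta$ large) is also what the paper does.

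The gap is Step~3: the $HV^\star$ branch is never established. The displayed telescoping breaks off, and the fallback does not work. The bound $\Var\le HQ^\star_h$ gives $H\,\mathbb{E}_{\pi^k}\sum_h Q^\star_h(s_h,a_h)=H\,\mathbb{E}_{\pi^k}\sum_h V^\star_h(s_h)+H(V^{\pi^k}_1-V^\star_1)(s_1)$, and only the second piece is a gap sum. The first piece is a sum of $H$ values-to-go, each of order $V^\star$ (e.g.\ deterministic transitions with all loss paid at step $H$). So you end up with $H^2V^\star$, not $HV^\star$.

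The missing idea is this: the law of total variance (\cref{lemma:variance-bound}) controls $\Var_{s_h,a_h,h}(L_h+V^{\pi^k}_{h+1}(s_{h+1}))$ along trajectories of $\pi^k$. It does not control the variance with $V^\star_{h+1}$ that your recursion produces, so you must first swap $V^\star$ for $V^{\pi^k}$ inside the variance.

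The paper makes this swap via \cref{lemma:var-star-to-pi}:
\begin{itemize}
\item It uses the triangle inequality for standard deviations together with $\Var((V^{\pi^k}_{h+1}-V^\star_{h+1})(s_{h+1}))\le HP_h(V^{\pi^k}_{h+1}-V^\star_{h+1})$.
\item It folds the resulting term $\frac{1}{3H}P_h(V^{\pi^k}_{h+1}-\hat V^k_{h+1})$ into the recursion coefficient before unrolling.
\item \cref{lemma:variance-bound} then yields $HV^{\pi^k}_1(s_1)$.
\item The excess $H\Delta_1/\eta$ is absorbed by reparametrizing $\eta\mapsto 3.4\eta+5eH^2$.
\end{itemize}

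The same swap also works in your already-unrolled form. By the performance-difference lemma (gaps are nonnegative), $\sum_h\mathbb{E}_{\pi^k}P_h(V^{\pi^k}_{h+1}-V^\star_{h+1})(s_h,a_h)\le H(V^{\pi^k}_1-V^\star_1)(s_1)\le H\Delta_1(s_1)$. Hence AM--GM with weight $\frac{1}{5H}$ on this term costs only $\Delta_1(s_1)/5$ plus $O(H^2)\,\mathbb{E}_{\pi^k}\sum_h\beta^k_h(s_h,a_h)$. That remainder fits inside $37H^2S\kappa$.
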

With these tools, the proof of our main result (\cref{thm:vibe}) follows by conditioning on the good event, bounding the optimism term by $0$, summing the bias term over all $K$ episodes, using \cref{eq:good-beta} and optimizing the free parameter $\eta=\sqrt{ \min\brk[c]{\QQ, H V^\star} K / 17 \nb H S A \kappa}$.
%
%
% Now, assume that the good event holds, which occurs with probability at least $1-\delta$ (\cref{lemma:good-event}). Then, combining \cref{lemma:mdp-optimism,lemma:mdp-bias} gives that for any $\eta > 0$
% %
% \begin{align*}
%     \regret
%     % &
%     % =
%     % \sum_{k \in [K]} V^{\pi^k}(s_1) - V^\star(s_1)
%     % \\
%     &
%     \le
%     \eta^{-1} \min\brk[c]{\QQ, H V^\star} K
%     +
%     (8.5\eta + 37 H^2 S \kappa)
%     \sum_{k \in [K]}
%     \EE[P, \pi^k]\brk[s]*{
%     \sum_{h \in [H]} \beta_h^k(s_h, a_h)
%     }
%     .
% \end{align*}
% %
% %
% Using \cref{eq:good-beta} of the good event, we then get
% %
% \begin{align*}
%     \regret
%     % \tag{\cref{eq:good-beta}}
%     &
%     \le
%     \eta^{-1} \min\brk[c]{\QQ, H V^\star} K
%     +
%     (17\eta + 74 H^2 S \kappa)
%     \nb H S A \kappa
%     .
% \end{align*}
% % where the second inequality used \cref{lemma:counts-sum}, a standard algebraic argument on the harmonic sum of the state-action counts.
% % \begin{align*}
% %     \regret
% %     &
% %     =
% %     24 \sqrt{H^3 S A K \beta^1 \log 16 K \delta^{-1}}
% %     +
% %     6 H^3 S^2 A
% %     \beta^1 \log 16 K \delta^{-1}
% %     \\
% %     &
% %     \le
% %     85 \sqrt{H^3 S A K \log^3 16 H S^2 A K \delta^{-1}}
% %     +
% %     75 H^3 S^2 A
% %     \log^3 16 H S^2 A K \delta^{-1}
% %     ,
% % \end{align*}
% Finally, choosing $\eta = \sqrt{ \min\brk[c]{\QQ, H V^\star} K / 17 \nb H S A \kappa}$, and plugging in the value of $\nb$ and $\kappa$ (see \cref{eq:def-beta}) concludes the proof. \hfill$\blacksquare$

\subsection{Proof of Lemma \ref{lemma:mdp-bias} (Bias Bound)}
\label{sec:proof-mdp-bias}

To ease notation, let
$
\hat{V}^{k,b}_h(s) = \brk[s]{
    \hat{\ell}_h^{k,b} + \hat{P}_h^{k,b} \hat{V}_{h+1}^{k}
}(s, \pi_h^k(s))
$
be the predicted value of batch $b \in [\nb]$ at step $h \in [H]$ of episode $k \in [K]$.
Then, for any $s \in \X, h \in [H], k \in [K], b \in [\nb]$, we decompose the error using the Bellman equations:
\begin{align}
\label{eq:bias-proof-1}
    &
    V_h^{\pi^k}(s) - \hat{V}_h^{k,b}(s)
    % &
    =
    \brk[s]{
    (\ell_h - \hat{\ell}_h^{k,b}) + P_h V_{h+1}^{\pi^k} - \hat{P}_h^{k,b} \hat{V}_{h+1}^k
    }(s, \pi_h^k(s))
    \\
    &
    =
    \brk[s]{
    (\ell_h - \hat{\ell}_h^{k,b})
    +
    (P_h - \hat{P}_h^{k,b}) V_{h+1}^{\star}
    +
    (P_h - \hat{P}_h^{k,b}) (V_{h+1}^{\pi^k} - V_{h+1}^\star)
    +
    \hat{P}_h^{k,b} (V_{h+1}^{\pi^k} - \hat{V}_{h+1}^k)
    }(s, \pi_h^k(s))
    .
    \nonumber
\end{align}
We bound the terms involving $V_{h+1}^{\pi^k}$ using our transition bounds. Using optimism (\cref{lemma:mdp-optimism}) and bounded loss Assumption~\ref{assumption:bounded-total-loss}, we have
$
0
\le
V_{h+1}^{\pi^k} - \hat{V}_{h+1}^k
\le
H
.
$
Thus, the good event (\cref{eq:good-phat1}) gives:
\begin{align*}
    \hat{P}_h^{k,b} (V_{h+1}^{\pi^k} - \hat{V}_{h+1}^k)
    \le
    \brk*{1 + \frac{1}{3H}} P_h (V_{h+1}^{\pi^k} - \hat{V}_{h+1}^k) 
    +
    2.2 H^2 S \kappa \beta_h^k
    .
\end{align*}
Similarly,
$
0
\le
V_{h+1}^{\pi^k} - V_{h+1}^{\star}
\le
H
,
$
so applying \cref{eq:good-phat2} and the optimism property yields:
\begin{align*}
    (P_h - \hat{P}_h^{k,b}) (V_{h+1}^{\pi^k} - V_{h+1}^\star)
    &
    \le
    \frac{1}{3H} P_h (V_{h+1}^{\pi^k} - V_{h+1}^\star) + 2.2 H^2 S \kappa \beta_h^k
    \\
    &
    \le
    \frac{1}{3H} P_h (V_{h+1}^{\pi^k} - \hat{V}_{h+1}^k) + 2.2 H^2 S \kappa \beta_h^k
    .
\end{align*}
Plugging the last two inequalities into \cref{eq:bias-proof-1} gives 
\begin{align*}
    V_h^{\pi^k}(s) - \hat{V}_h^{k,b}(s)
    &
    \le
    \brk[s]{
    (\ell_h - \hat{\ell}_h^{k,b})
    +
    (P_h - \hat{P}_h^{k,b}) V_{h+1}^{\star} }(s, \pi_h^k(s))
    \\
    &
    \quad +
    \brk[s]{4.4 H^2 S \kappa \beta_h^k
    +
    (1 + 2/(3H)) P_h (V_{h+1}^{\pi^k} - \hat{V}_{h+1}^k)
    }(s, \pi_h^k(s))
    % \\
    % &
    % +
    % 4.4 H^2 S \kappa \beta_h^k(s, \pi^k(s))
    .
\end{align*}
% 
% 
% Now, recall that conditioned on $s,a,h$, the random variables $L_h$ and $s'$ are independent, thus
% $$
% \Var_{s,\pi^k(s)}(L_h + V_{h+1}^\star(s'))
% =
% \Var_{s,\pi^k(s)}(L_h)
% +
% \Var_{s,\pi^k(s)}(V_{h+1}^\star(s'))
% .
% $$
Now, notice that $\hat{V}_h^{k} = \quantile(\hat{V}_h^{k,b}, b \in [\nb])$ (see \cref{alg:vibe}) and that the quantile (\cref{eq:quantile-def}) is equivariant to shifts, i.e., $\quantile(\mu_b + c, b \in [\nb]) = \quantile(\mu_b, b \in [\nb]) + c.$
Applying this with the good event's \cref{eq:good-bias}, and noticing that $\kappa \ge \log 40$ we get the step-wise recursion:
\begin{align}
\label{eq:bias-proof-2}
\begin{aligned}
    V_h^{\pi^k}(s) - \hat{V}_h^k(s)
    &
    \le
    1.7\sqrt{\beta_h^k(s, \pi_h^k(s)) \Var_{s,\pi_h^k(s),h}(L_h + V_{h+1}^\star(s_{h+1}))}
    +
    7 H^2 S \kappa \beta_h^k(s, \pi^k(s))
    \\
    &
    \quad +
    (1 + 2/(3H)) P_h (V_{h+1}^{\pi^k} - \hat{V}_{h+1}^k)
    (s, \pi_h^k(s))
    .
\end{aligned}
\end{align}
Recalling that
$
\Var_{s,\pi_h^k(s),h}(L_h + V_{h+1}^\star(s_{h+1})) \le \QQh
$
(see \cref{eq:def-mdp-variance}),
and unrolling the recursion, we get that for any $\eta > 0$
\begin{align*}
    % &
    V_1^{\pi^k}(s_1) - \hat{V}_1^k(s_1)
    % \\
    &
    \le
    \EE[P, \pi^k]\sum_{h \in [H]} (1+1/H)^{h-1} \brk[s]*{
    1.7\sqrt{\QQh \beta_h^k(s_h, a_h)}
    +
    7 H^2 S \kappa \beta_h^k(s_h, a_h)
    }
    \\
    &
    \le
    \eta^{-1} \QQ
    +
    (5.5\eta + 20 H^2 S \kappa) \EE[P, \pi^k]\sum_{h \in [H]} \beta_h^k(s_h, a_h)
    ,
\end{align*}
where the second inequality used $(1+1/H)^h \le e$ for all $h \in [H]$, and the AM-GM inequality ($2\sqrt{ab} \le a + b$). This completes the first part of the lemma.

Next, by optimism (\cref{lemma:mdp-optimism}) and the definition of $V^\star$, we have $\hat{V}^k_{h+1} \le V^\star_{h+1} \le V^{\pi^k}$. Thus, plugging \cref{lemma:var-star-to-pi} into \cref{eq:bias-proof-2}, we get that for any $\eta > 0$
\begin{align*}
    V_h^{\pi^k}(s) - \hat{V}_h^k(s)
    &
    \le
    \eta^{-1} \Var_{s,\pi_h^k(s),h}(L_h + V_{h+1}^{\pi^k}(s_{h+1}))
    +
    ((2/e) \eta + 8 H^2 S \kappa) \beta_h^k(s, \pi^k(s))
    \\
    &
    \quad +
    (1 + 1/H) P_h (V_{h+1}^{\pi^k} - \hat{V}_{h+1}^k)
    (s, \pi^k(s))
    .
\end{align*}
Choosing $\eta \ge 5 e H^2$ and
similarly unrolling the recursion, we have
\begin{align*}
    &
    V_1^{\pi^k}(s_1) - \hat{V}_1^k(s_1)
    \\
    &
    \le
    \EE[P, \pi^k]\sum_{h \in [H]} (1+1/H)^{h-1} \brk[s]*{
    \eta^{-1} \Var_{s,\pi_h^k(s),h}(L_h + V_{h+1}^{\pi^k}(s_{h+1}))
    +
    ((2/e) \eta + 8 H^2 S \kappa) \beta_h^k(s, \pi^k(s))
    }
    \\
    \tag{\cref{lemma:variance-bound}}
    &
    \le
    e \eta^{-1} H V^{\pi^k}_1(s_1)
    +
    (2\eta + 22 H^2 S \kappa) \EE[P, \pi^k]\sum_{h \in [H]} \beta_h^k(s_h, a_h)
    \\
    &
    \le
    (V^{\pi^k}_1(s_1) - V^\star)/5
    +
    e \eta^{-1} H V^{\star}
    +
    (2\eta + 22 H^2 S \kappa) \EE[P, \pi^k]\sum_{h \in [H]} \beta_h^k(s_h, a_h)
    ,
\end{align*}
where \cref{lemma:variance-bound} is an application of the Law of Total Variance \citep{azar2017minimax} together with the bounded total loss assumption. 
Using that $\hat{V}_1^k(s_1) \le V^\star$ (\cref{lemma:mdp-optimism}) and rearranging, we have 
\begin{align*}
    &
    V_1^{\pi^k}(s_1) - \hat{V}_1^k(s_1)
    \le
    3.4 \eta^{-1} H V^{\star}
    +
    (2.5\eta + 27.5 H^2 S \kappa) \EE[P, \pi^k]\sum_{h \in [H]} \beta_h^k(s_h, a_h)
    .
\end{align*}
Replacing $\eta$ with $3.4\eta + 5eH^2$ and using $\kappa \ge \log 40 \approx 3.68$ concludes the proof.
\hfill$\blacksquare$

\subsection{Proof of Lemma \ref{lemma:good-event} (Good Event)}
\label{sec:proof-good-event}

Because of the round-robin schedule (\cref{eq:round-robin}), the samples from each state-action pair are (nearly) uniformly distributed between the datasets. Thus, we can relate the size of each dataset with the global counts as $\abs{D_h^{k,b}(s,a)} \ge \floor{N_h^k(s,a) / \nb}$. Therefore, for all $s \in \X, a \in \A, h \in [H], k \in [K],$ and $b \in [\nb]$ we have
\begin{align}
\label{eq:counts-bound}
    \abs{D_h^{k,b}(s,a)} + 1
    \ge
    \nb^{-1} \max\brk[c]{1, N_h^k(s,a)}
    .
\end{align}
% 
% \begin{proof}
%     The samples of each state-action pair are distributed evenly between batches, and thus
%     \begin{align*}
%         \abs{D_h^{k,b}(s,a)} + 1
%         \ge
%         \floor{N_h^k(s,a) / \nb} + 1
%         \ge
%         \nb^{-1} N_h^k(s,a)
%         .
%     \end{align*}
%     Since $\nb \ge 1$ we also have
%     $
%     \abs{D_h^{k,b}(s,a)} + 1
%     \ge
%     1
%     \ge \nb^{-1}
%     ,
%     $
%     concluding the proof.
% \end{proof}
%
%
%
%
%
Now, we show that each of \crefrange{eq:good-optimism-mdp}{eq:good-beta} hold with probability at least $1-\delta/5$; thus, taking a union bound concludes the proof of \cref{lemma:good-event}.

\textbf{(\cref{eq:good-optimism-mdp}):} Follows from \cref{lemma:good-optimism}.

\textbf{(\cref{eq:good-bias}):}
As shown in \cref{lemma:good-optimism}, $\hat{\mathcal{T}}^{k}_h V_{h+1}^\star(s, a)$ satisfies the basic conditions of \cref{lemma:quantile-of-means}. To satisfy the additional conditions in the second claim of \cref{lemma:quantile-of-means}, we use the bounded total loss assumption to bound each sample by $H$, and \cref{eq:counts-bound} to bound the size of the datasets. Thus, the proof is concluded via a union bound on $s \in \X, a \in \A, h \in [H]$, and the (at most $K$) dataset configurations\footnote{See explanation in the proof of \cref{lemma:good-optimism}}.

\textbf{(\cref{eq:good-phat1,eq:good-phat2}):}
Fix
$s', s \in \X, a \in \A, h \in [H], b \in [\nb], \abs{\D_h^{k,b}(s,a)} \in [K/\nb]$.
For the first inequality,
we apply \cref{lemma:freedman} with $X_t = \indEvent{s_+ = s'} - p_h(s' \mid s,a)$, for $s_+ \in \D_h^{k,b}(s,a)$ and $\lambda = 1/3(e-2)H$. Notice that $R = 1$ and
\begin{align*}
    \EE\brk[s]{X_t^2 \mid X_1, \ldots, X_{t-1}}
    =
    P_h(s' \mid s,a) (1 - P_h(s' \mid s,a))
    \le
    P_h(s' \mid s,a)
    ,
\end{align*}
where the last inequality holds since $P_h \in [0,1]$. Thus, by \cref{lemma:freedman}, with probability at least $1-\delta$
\begin{align*}
    \sum_{s_+ \in \D_h^{k,b}(s,a)} \indEvent{s_+ = s'} - P_h(s' \mid s,a)
    &
    \le
    \frac{1}{3H} \abs{\D_h^{k,b}(s,a)} p_h(s' \mid s,a)
    +
    3(e-2) H \log \frac{1}{\delta}
    .
\end{align*}
Recalling the definition of $\hat{P}$ in \cref{alg:vibe}, 
rearranging the inequality and dividing by $\abs{\D_h^{k,b}(s,a)} + 1$ gives
\begin{align}
\label{eq:phat-freedman1}
    \hat{P}_h^{k,b}(s' \mid s,a)
    -
    \brk*{1 + \frac{1}{3H}} P_h(s' \mid s,a)
    \le 
    \frac{3(e-2)H \log \delta^{-1}}{\abs{\D_h^{k,b}(s,a)} + 1}
    \le
    \frac{3(e-2)H \nb \log \delta^{-1}}{\max\brk[c]{1,N_h^k(s,a)}}
    ,
\end{align}
where the last inequality used \cref{eq:counts-bound}.
For the second inequality, we apply \cref{lemma:freedman} with $X_t = P_h(s' \mid s,a) - \indEvent{s_+ = s'}$ to get that with probability at least $1-\delta$
\begin{align*}
    \sum_{s_+ \in \D_h^{k,b}(s,a)} P_h(s' \mid s,a) - \indEvent{s_+ = s'}
    &
    \le
    \frac{1}{3H} \abs{\D_h^{k,b}(s,a)} p_h(s' \mid s,a)
    +
    3(e-2)H \log \frac{1}{\delta}
    .
\end{align*}
Adding $P_h(s' \mid s,a)$ to both sides, rearranging and dividing by $\abs{\D_h^{k,b}(s,a)} + 1$ gives
\begin{align}
\label{eq:phat-freedman2}
    \brk*{1 - \frac{1}{3H}}P_h(s' \mid s,a) - \hat{P}_h^{k,b}(s' \mid s,a)
    \le
    % \frac{1}{3H} P_h(s' \mid s,a) +
    \frac{3(e-2)H \log e \delta^{-1}}{\abs{\D_h^{k,b}(s,a)} + 1}
    \le
    \frac{3(e-2) H \nb \log e \delta^{-1}}{\max\brk[c]{1, N_h^k(s,a)}}
    ,
\end{align}
where the last inequality used \cref{eq:counts-bound}.
Setting $\delta$ in \cref{eq:phat-freedman1,eq:phat-freedman2} appropriately and performing a union bound over $s', s \in \X, a \in \A, h \in [H], b \in [\nb], \abs{\D_h^{k,b}(s,a)} \in [K/\nb]$ concludes the proof.

\textbf{(\cref{eq:good-beta}):}
First, applying \cref{lemma:multiplicative-concentration} with $X_t = \sum_{h \in [H]} \beta_h^k(s_h,a_h) \le H \nb$ gives that with probability at least $1-\delta/5$
\begin{align*}
    \sum_{k \in [K]} \EE[\pi^k, P] \sum_{h \in [H]} \beta_h^k(s_h,a_h)
    \le
    2 \sum_{k \in [K]} \sum_{h \in [H]} \beta_h^k(s_h^k, a_h^k)
    +
    4 H \nb \log 5 \delta^{-1}
    .
\end{align*}
The proof is concluded by applying \cref{lemma:counts-sum}, a standard algebraic argument on the harmonic sum of the state-action counts, together with the observation that the regret is non-zero only if $A \ge 2$.
\hfill$\blacksquare$

\bibliography{bibliography}
\bibliographystyle{abbrvnat}

%%%%%%%%%%%%%%%%%%%%%%%%%%%%%%%%%%%%%%%%%%%%%%%%%%%%%%%%%%%%

% \newpage
\appendix

\section{Additional Proofs}
\subsection{Proof of Lemma \ref{lemma:quantile-of-means} (QoM)}
\label{sec:proof-of-qom}
% \begin{proof}[of \cref{lemma:quantile-of-means}]
    For the first part, notice that the event holds when at least $\ceil{\alpha \nb}$ batches yield an estimate less than or equal to $\mu$.
    By \cref{corollary:feige}, each batch satisfies this with probability at least $1/13$.
    Because the subsets $\D^b$ are fixed and disjoint, they are independent. Letting $\Bin(n,p)$ be the binomial distribution with $n$ trails and success probability $p$, we conclude that 
    \begin{align*}
        \Pr
        % &
        \brk*{
        \hat{\mu}_\alpha
        \le
        \mu
        }
        \ge
        \Pr\brk*{
        \Bin(\nb, 1/13) \ge \ceil{\nb / 65}
        }
        &
        \ge
        \Pr\brk*{
        \Bin(\nb, 1/13) > \nb / 65
        }
        \\
        &
        =
        1
        -
        \Pr\brk*{
        \Bin(\nb, 1/13) \le \nb / 65
        }
        \\
        &
        \ge
        1 - \exp(-\nb \cdot \mathrm{D_{KL}}(1/65 || 1/13))
        \\
        &
        \ge
        1 - \exp(- \nb / 26)
        \\
        &
        \ge
        1 - \delta
        ,
    \end{align*}
    where the fourth transition used a standard Chernoff bound for Binomial random variables, and $\mathrm{D_{KL}}$ is the Kullback–Leibler divergence.

    The second part of the proof follows similar logic. Denoting the mean estimate of batch $b$ in \cref{eq:quantile-of-means} as $\hat{\mu}^b$, we use \cref{lemma:freedman} (Freedman's inequality) to get
    \begin{align*}
        \Pr\brk*{
        \sum_{X \in \D^b} X 
        \ge
        \abs{\D^b} \mu
        -
        1.7\sqrt{\abs{\D^b} \sigma^2}
        -
        8 R
        }
        \ge
        0.99953
        .
    \end{align*}
    Rearranging the inner term, dividing by $\abs{\D^b} + 1 \ge \nb^{-1}\max\brk[c]{1, n}$ we get
    \begin{align*}
        \Pr\brk*{
        \hat{\mu}^b
        \ge
        \mu
        -
        1.7\sqrt{\frac{\sigma^2 \nb}{\max\brk[c]{1,n}}}
        -
        \frac{9 R \nb}{\max\brk[c]{1,n}}
        }
        \ge
        0.99953
        .
    \end{align*}
    For the event to hold, at least $\nb + 1 - \ceil{\nb / 65}$ batches need to satisfy the above. As before, this occurs with probability at least
    \begin{align*}
        \Pr\brk*{
        \Bin(\nb, 0.99953) \ge B+1-\ceil{\nb / 65}
        }
        &
        \ge
        \Pr\brk*{
        \Bin(\nb, 0.99953) > 64 \nb / 65
        }
        \\
        &
        =
        1
        -
        \Pr\brk*{
        \Bin(\nb, 0.99953) \le 64\nb / 65
        }
        \\
        &
        \ge
        1 - \exp(-\nb \cdot \mathrm{D_{KL}}(64/65 || 0.99953))
        \\
        &
        \ge
        1 - \exp(- \nb / 26)
        \\
        &
        \ge
        1 - \delta
        .
        % \qedhere
        \tag{$\blacksquare$}
    \end{align*}
% \end{proof}

\subsection{Existing Results}
\label{app:existing-results}
\subsubsection{Summing the counts}
\begin{lemma}
\label{lemma:counts-sum}
    Recall the state-action counts $N_h^k(s,a) = \sum_{k' \in [k-1]} \indEvent{(s,a) = (s_h^{k'}, a_h^{k'})}$. We have that
    \begin{align*}
        \sum_{k \in [K]} \sum_{h \in [H]} \frac{1}{\max\brk[c]{1, N_h^k(s_h^k, a_h^k)}}
        \le
        S A H \log 8 K
        .
    \end{align*}
\end{lemma}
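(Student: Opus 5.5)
The plan is to decompose the sum by state-action-step triples and handle each triple with a harmonic-sum bound. Regroup
\[
\sum_{k \in [K]} \sum_{h \in [H]} \frac{1}{\max\{1, N_h^k(s_h^k,a_h^k)\}}
=
\sum_{h \in [H]} \sum_{(s,a)} \sum_{k \in [K]} \frac{\mathbb{I}\{(s_h^k,a_h^k)=(s,a)\}}{\max\{1, N_h^k(s,a)\}} .
\]
It then suffices to show that, for each fixed $(s,a,h)$, the inner sum is at most $\log 8K$. Summing this bound over the $SAH$ triples gives the claim.

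Fix $(s,a,h)$ and let $k_1 < k_2 < \dots < k_m$ be the episodes in which $(s,a)$ is visited at step $h$, with $m \le K$. Since $N_h^k$ counts visits strictly before episode $k$, we have $N_h^{k_j}(s,a) = j-1$. The inner sum is therefore
\[
\frac{1}{\max\{1,0\}} + \sum_{j=2}^{m} \frac{1}{j-1} = 1 + \sum_{i=1}^{m-1} \frac{1}{i} \le 2 + \log(m-1) \le 2 + \log K ,
\]
using $\sum_{i=1}^{n} 1/i \le 1 + \log n$. The only care needed is the $j=1$ term, where the $\max\{1,\cdot\}$ is active. When $m \le 1$ the sum is at most $1$, and the bound still holds.

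Finally, $2 + \log K = \log(e^2 K) \le \log 8K$ because $e^2 \approx 7.39 \le 8$. This is where the constant $8$ comes from. Multiplying by the $SAH$ triples concludes the argument. There is no real obstacle here; the only delicate points are the off-by-one in the count, which comes from the strict "before episode $k$" definition, and checking the constant.
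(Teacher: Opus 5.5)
Your proof is correct and takes the same approach as the paper: regroup by $(s,a,h)$, note that the visits to a fixed triple contribute $1 + \sum_{i=1}^{m-1} 1/i$, and bound this harmonic sum by $2 + \log K \le \log 8K$. Your handling of the off-by-one is slightly more careful than the paper's, which writes the inner sum as $\sum_{i=0}^{N_h^K(s,a)} 1/\max\{1,i\}$ (really an upper bound, not an equality) before extending it to $i \le K$.
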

\begin{proof}[(for sake of completeness)]
    Recall that $N_h^k(s, a) = \sum_{k' \in [k-1]} \indEvent{(s_h^{k'}, a_h^{k'}) = (s, a)}$, i.e., the counts are incremented every time a state-action pair is visited. Thus, we have
    \begin{align*}
        \sum_{k \in [K]} \sum_{h \in [H]} \frac{1}{\max\brk[c]{1, N_h^k(s_h^k, a_h^k)}}
        &
        =
        \sum_{h \in [H]} \sum_{s \in \X} \sum_{a \in \A} \sum_{k \in [K]} \frac{\indEvent{(s_h^{k}, a_h^{k}) = (s, a)}}{\max\brk[c]{1, N_h^k(s_h^k, a_h^k)}}
        \\
        &
        =
        \sum_{h \in [H]} \sum_{s \in \X} \sum_{a \in \A} \sum_{i = 0}^{N_h^K(s,a)} \frac{1}{\max\brk[c]{1, i}}
        \\
        &
        \le
        \sum_{h \in [H]} \sum_{s \in \X} \sum_{a \in \A} \sum_{i = 0}^{K} \frac{1}{\max\brk[c]{1, i}}
        \\
        &
        \le
        \sum_{h \in [H]} \sum_{s \in \X} \sum_{a \in \A} \log 8 K
        \\&
        =
        S A H \log 8 K
        ,
    \end{align*}
    where the first inequality used that $N_h^k(s,a) \le K$ for all $k \in [K]$.
\end{proof}

\subsubsection{Variance Manipulation}
The following result is a variant of Lemma 21 in \cite{merlisLookahead}.
\begin{lemma}
\label{lemma:var-star-to-pi}
    Let $V, \overline{V}, \underline{V} \in [0,H]^\X$ be value functions such that $\underline{V}(s) \le V(s) \le \overline{V}(s)$ for all $s \in \X$. Then for any $h \in [H], s \in \X, a \in \A, \beta, \eta > 0$
    % \begin{align*}
    %     \sqrt{\beta \Var_{s_h,a_h}(V(s'))}
    %     \le
    %     \sqrt{\beta \Var_{s_h,a_h}(\overline{V}(s'))}
    %     +
    %     \frac{1}{12H} P_h (\overline{{V}} - \underline{V})(s_h,a_h)
    %     +
    %     3 \beta H^2
    %     .
    % \end{align*}
    \begin{align*}
        \sqrt{\beta \Var_{s,a,h}(L_h + V(s_{h+1}))}
        \le
        \eta^{-1}
        \Var_{s,a,h}(L_h + \overline{V}(s_{h+1}))
        +
        \frac{1}{3H} P_h (\overline{{V}} - \underline{V})(s_h,a_h)
        +
        \frac{(3 H^2 + \eta) \beta}{4}
        .
    \end{align*}
\end{lemma}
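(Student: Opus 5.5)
The plan is to compare the standard deviation of $L_h + V(s_{h+1})$ with that of $L_h + \overline{V}(s_{h+1})$, and then split the resulting square roots with AM-GM so that the coefficients come out exactly as in the statement. Fix $h,s,a$ and write $X = L_h + V(s_{h+1})$ and $Y = L_h + \overline{V}(s_{h+1})$, conditional on $(s_h,a_h)=(s,a)$. Then $Y - X = (\overline{V}-V)(s_{h+1})$, and the common loss term cancels.

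\emph{Step 1 (triangle inequality for the standard deviation).} The map $Z \mapsto \sqrt{\Var_{s,a,h}(Z)}$ is the $L^2$ norm of $Z - \EE[Z]$, so it is a seminorm. Writing $X = Y - (Y-X)$ gives
\[
\sqrt{\Var_{s,a,h}(X)} \le \sqrt{\Var_{s,a,h}(Y)} + \sqrt{\Var_{s,a,h}\brk*{(\overline{V}-V)(s_{h+1})}} .
\]

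\emph{Step 2 (variance of the gap).} Since $0 \le \overline{V} - V \le \overline{V} - \underline{V} \le H$ pointwise, we can bound the variance by the second moment and use $z^2 \le Hz$ on $[0,H]$:
\[
\Var_{s,a,h}\brk*{(\overline{V}-V)(s_{h+1})} \le P_h(\overline{V}-V)^2(s,a) \le H\, P_h(\overline{V}-V)(s,a) \le H\, P_h(\overline{V}-\underline{V})(s,a).
\]

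\emph{Step 3 (AM-GM on each piece).} We use $\sqrt{ab} \le t a + b/(4t)$ for $t > 0$.
\begin{itemize}
\item For the first term, take $a = \Var_{s,a,h}(Y)$, $b = \beta$ and $t = \eta^{-1}$. This gives $\sqrt{\beta \Var_{s,a,h}(Y)} \le \eta^{-1}\Var_{s,a,h}(Y) + \eta\beta/4$.
\item For the second term, take $a = P_h(\overline{V}-\underline{V})(s,a)$, $b = \beta H$ and $t = 1/(3H)$. This gives
\[
\sqrt{\beta H\, P_h(\overline{V}-\underline{V})} \le \frac{1}{3H} P_h(\overline{V}-\underline{V}) + \frac{3H^2\beta}{4}.
\]
\end{itemize}

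Multiplying Step 1 by $\sqrt{\beta}$, inserting Step 2, and then applying the two bounds of Step 3 yields exactly
\[
\eta^{-1}\Var_{s,a,h}(L_h+\overline{V}(s_{h+1})) + \frac{1}{3H}P_h(\overline{V}-\underline{V}) + \frac{(3H^2+\eta)\beta}{4},
\]
which is the claimed bound.

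There is no real obstacle. The one point needing care is Step 1: the triangle inequality must be applied to standard deviations, not to variances, since applying it to variances would lose a factor of $2$ and break the stated constants. One should also note that the loss $L_h$ cancels in $Y - X$, which is why the bound does not depend on the loss distribution.
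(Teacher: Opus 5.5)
Your proposal is correct and follows essentially the same route as the paper: the triangle inequality for the standard deviation, the bound $\Var_{s,a,h}\bigl((\overline{V}-V)(s_{h+1})\bigr) \le H\,P_h(\overline{V}-\underline{V})(s,a)$, and two AM-GM steps that produce exactly the stated constants. Your explicit weighted form $\sqrt{ab} \le ta + b/(4t)$, with $t=\eta^{-1}$ and $t=1/(3H)$, makes the constant bookkeeping more transparent than the paper's brief appeal to AM-GM.
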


\begin{proof}
    Recall that the standard deviation satisfies a triangle inequality (see \cite{zanette19a} Eqs. 48-51), i.e., for $\sigma(X) = \sqrt{\Var(X)}$ we have $\sigma(X+Y) \le \sigma(X) + \sigma(Y)$. Thus, we have
    \begin{align*}
        \sqrt{\beta \Var_{s,a,h}(L_h + V(s_{h+1}))}
        \le
        \sqrt{\beta \Var_{s,a,h}(L_h + \overline{V}(s_{h+1}))}
        +
        \sqrt{\beta \Var_{s,a,h}((V - \overline{V})(s_{h+1}))}
        .
    \end{align*}
    Now, using our assumption on the value functions, we have
    \begin{align*}
        \sqrt{\beta \Var_{s,a,h}((V - \overline{V})(s_{h+1}))}
        \le
        \sqrt{ \beta H P_h (\overline{V} - V)(s,a)}
        &
        \le
        \sqrt{ \beta H P_h (\overline{V} - \underline{V})(s,a)}
        \\
        &
        \le
        (1/3H) P_h (\overline{V} - \underline{V})(s,a)
        +
        (3/4) H^2 \beta
        ,
    \end{align*}
    where the last step also used the AM-GM inequality ($\sqrt{ab} \le (a^2 + b^2)/2$). Plugging this into the previous inequality and using AM-GM on the first term concludes the proof.
\end{proof}

Next, we state the well-known Law of Total Variance (LTV), originally due to \cite{azar2017minimax}.
\begin{lemma}[\cite{zanette19a}, Lemma 15]
\label{lemma:total-variance}
    For any policy $\pi$, it holds that
    \begin{align*}
        \EE[P,\pi] \brk[s]*{
        \sum_{h \in [H]} \Var_{s_h, a_h, h}(V^{\pi}(s_{h+1}))
        }
        =
        \EE[P,\pi]\brk[s]*{
        \sum_{h \in [H]} \ell_h(s_h, a_h) - V_1^\pi(s_1)
        }^2
    \end{align*}
\end{lemma}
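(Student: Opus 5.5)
The plan is to write the centered cumulative mean loss as a sum of martingale differences and then use orthogonality of martingale increments. Fix $\pi$ (deterministic Markov, as in \cref{sec:setting}) and write $V_h = V_h^{\pi}$, with $V_{H+1} \equiv 0$. Along a trajectory drawn from $\EE[P,\pi]$, I define for each $h \in [H]$
\begin{align*}
    Z_h \eqdef V_{h+1}(s_{h+1}) - P_h V_{h+1}(s_h, a_h).
\end{align*}
Let $\mathcal{F}_h$ be the sigma-algebra generated by $(s_1,a_1,\ldots,s_h,a_h)$. By the Markov property, conditioned on $\mathcal{F}_h$ the next state is drawn from $P_h(\cdot \mid s_h,a_h)$. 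Hence $\EE[Z_h \mid \mathcal{F}_h] = 0$, and $Z_h$ is $\mathcal{F}_{h+1}$-measurable, since $a_{h+1} = \pi_{h+1}(s_{h+1})$ is a deterministic function of $s_{h+1}$. Moreover,
\begin{align*}
    \EE[Z_h^2 \mid \mathcal{F}_h] = \Var_{s_h,a_h,h}(V^{\pi}(s_{h+1})).
\end{align*}

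\textbf{Step 1 (telescoping).} Since $\pi$ is deterministic and $a_h = \pi_h(s_h)$, the Bellman equation for $V^\pi$ gives $\ell_h(s_h,a_h) = V_h(s_h) - P_h V_{h+1}(s_h,a_h)$. Substituting this into each summand, I get
\begin{align*}
    \sum_{h \in [H]} \ell_h(s_h,a_h) - V_1(s_1)
    = \sum_{h \in [H]} \brk[s]*{V_h(s_h) - V_{h+1}(s_{h+1}) + Z_h} - V_1(s_1).
\end{align*}
The differences $V_h(s_h) - V_{h+1}(s_{h+1})$ telescope to $V_1(s_1) - V_{H+1}(s_{H+1}) = V_1(s_1)$. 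Therefore the left-hand side equals $\sum_{h \in [H]} Z_h$ pointwise on every trajectory.

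\textbf{Step 2 (orthogonality).} I expand $\EE[P,\pi][(\sum_h Z_h)^2] = \sum_h \EE[Z_h^2] + 2\sum_{h<h'} \EE[Z_h Z_{h'}]$. For $h < h'$, the variable $Z_h$ is $\mathcal{F}_{h+1} \subseteq \mathcal{F}_{h'}$-measurable. By the tower rule, $\EE[Z_h Z_{h'}] = \EE[Z_h \EE[Z_{h'} \mid \mathcal{F}_{h'}]] = 0$, so all cross terms vanish. Applying the tower rule to the diagonal terms turns each $\EE[Z_h^2]$ into $\EE[P,\pi][\Var_{s_h,a_h,h}(V^\pi(s_{h+1}))]$. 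This yields the claimed identity.

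Integrability is immediate, since all quantities are bounded by $H$ under \cref{assumption:bounded-total-loss}. The only delicate point, which I expect to be the main obstacle, is bookkeeping of the filtration. One must verify that conditioning on the full history $\mathcal{F}_h$, rather than only on $(s_h,a_h)$, leaves the conditional law of $s_{h+1}$ unchanged (the Markov property of the MDP under a Markov policy). One must also check that the statement uses the mean losses $\ell_h$, not the random $L_h$. Because the mean losses are deterministic given $(s_h,a_h)$, they contribute no noise and the telescoping identity is exact.
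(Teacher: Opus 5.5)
Your proof is correct. The paper gives no proof of its own here and simply imports the lemma from \cite{zanette19a} (originally \cite{azar2017minimax}); your argument, which telescopes the Bellman equation of a deterministic policy into the martingale differences $Z_h$ and then kills the cross terms with the tower rule, is the standard proof of this law of total variance. Your restriction to deterministic Markov policies is also the right reading of ``any policy'' in this setting, because for a randomized policy the same decomposition shows the right-hand side exceeds the left by $\sum_{h \in [H]}\mathbb{E}\bigl[\mathrm{Var}(Q_h^{\pi}(s_h,a_h)\mid s_h)\bigr]$, which is the contribution of action randomization.
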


\begin{lemma}[Variance Bound]
\label{lemma:variance-bound}
    For any policy $\pi: \X \to \A$ we have
    \begin{align*}
        \EE[P, \pi]
        % \brk[s]*{
        \sum_{h \in [H]} 
        \Var_{s_h, a_h, h}(L_h + V_{h+1}^{\pi}(s_{h+1}))
        % }
        =
        \EE[P,\pi, L]\brk[s]*{
        \sum_{h \in [H]} L_h - V_1^\pi(s_1)
        }^2
        \le
        H V_1^\pi(s_1)
        % \EE[P,\pi, L]\brk[s]*{
        % \sum_{h \in [H]} L_h
        % }^2
        .
    \end{align*}
\end{lemma}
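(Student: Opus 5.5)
The statement to prove is \cref{lemma:variance-bound}. It has two parts: an identity obtained from the law of total variance, and an inequality that comes from nonnegativity and the bounded total loss assumption.

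\textbf{The identity.} I would adapt the argument of \cref{lemma:total-variance} so that it includes the random losses. Write $G_h = \sum_{h' \ge h} L_{h'}$ for the realized loss-to-go, so $\EE[G_h \mid s_h] = V_h^\pi(s_h)$. Decompose
\begin{align*}
G_1 - V_1^\pi(s_1) = \sum_{h \in [H]} \bigl(L_h + V_{h+1}^\pi(s_{h+1}) - V_h^\pi(s_h)\bigr),
\end{align*}
using the telescoping sum and $V_{H+1}^\pi = 0$. Each summand $D_h$ is a martingale difference with respect to the filtration generated by $(s_1,a_1,L_1,\ldots,s_h,a_h)$. This holds because $a_h = \pi_h(s_h)$ and, by the Bellman equation, $\EE[L_h + V_{h+1}^\pi(s_{h+1}) \mid s_h,a_h] = V_h^\pi(s_h)$. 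The cross terms therefore vanish in expectation, and
\begin{align*}
\EE[G_1 - V_1^\pi(s_1)]^2 = \sum_h \EE[D_h^2] = \EE\sum_h \Var_{s_h,a_h,h}(L_h + V_{h+1}^\pi(s_{h+1})).
\end{align*}
The key point is that $L_h$ and $s_{h+1}$ are sampled given $(s_h,a_h)$, and $D_h$ is measurable with respect to the next $\sigma$-field. Independence of $L_h$ and $s_{h+1}$ is not needed.

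\textbf{The inequality.} Since $\EE[G_1] = V_1^\pi(s_1)$, the quantity $\EE[G_1 - V_1^\pi(s_1)]^2$ is $\Var(G_1) \le \EE[G_1^2]$. By \cref{assumption:bounded-total-loss}, $0 \le G_1 \le H$ almost surely, so $G_1^2 \le H G_1$. Taking expectations gives $\EE[G_1^2] \le H\,\EE[G_1] = H V_1^\pi(s_1)$, which is the claim.

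\textbf{Main obstacle.} The only delicate step is the martingale bookkeeping. One has to set up the filtration so that $L_h$ is included after step $h$, and verify that $\EE[D_h D_{h'}] = 0$ for $h < h'$ by conditioning on the $\sigma$-field at step $h'$. Everything else is routine.
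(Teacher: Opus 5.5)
Your proof is correct, and the inequality part is identical to the paper's. The identity, however, is proved by a different route.

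The paper's route has four steps:
\begin{itemize}
\item It uses the conditional independence of $L_h$ and $s_{h+1}$ given $(s_h,a_h)$ to split each term into $\mathrm{Var}_{s_h,a_h,h}(L_h) + \mathrm{Var}_{s_h,a_h,h}(V^\pi_{h+1}(s_{h+1}))$.
\item It applies the cited law of total variance (\cref{lemma:total-variance}) to the transition part, obtaining $\mathbb{E}[\sum_h \ell_h(s_h,a_h) - V_1^\pi(s_1)]^2$.
\item It handles the loss part by conditioning on the trajectory $\iota$, obtaining $\mathbb{E}[\sum_h (L_h - \ell_h(s_h,a_h))]^2$. This step uses that the policy ignores the losses and that the losses are conditionally independent given $\iota$.
\item It shows these two pieces are uncorrelated, so they recombine into $\mathbb{E}[\sum_h L_h - V_1^\pi(s_1)]^2$.
\end{itemize}

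Your single telescoping martingale $D_h = L_h + V^\pi_{h+1}(s_{h+1}) - V^\pi_h(s_h)$ does all of this at once. In effect it reproves \cref{lemma:total-variance} with the random losses folded in. As you note, it needs only that the conditional law of $(L_h, s_{h+1})$ given the history depends on $(s_h,a_h)$ alone, not that $L_h$ and $s_{h+1}$ be independent.

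The paper's route is modular: it reuses a known lemma and exposes the split into loss noise and transition noise. Yours is self-contained, shorter, and slightly more general.
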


\begin{proof}
    Recall the bounded total loss assumption, $0 \le \sum_{h \in [H]} L_h \le H$. Thus, we have
    \begin{align*}
        \EE[P,\pi, L]\brk[s]*{
        \sum_{h \in [H]} L_h - V_1^\pi(s_1)
        }^2
        \le
        \EE[P,\pi, L]\brk[s]*{
        \sum_{h \in [H]} L_h}^2
        \le
        H \EE[P,\pi, L]\brk[s]*{
        \sum_{h \in [H]} L_h}
        =
        H V^\pi_1(s_1)
        .
    \end{align*}
    Next, notice that $s_{h+1}$ and $L_h$ are conditionally independent on $s_h, a_h$, thus
    \begin{align*}
        \EE[P, \pi]
        \sum_{h \in [H]}
        \Var_{s_h, a_h, h}(L_h + &V_{h+1}^{\pi}(s_{h+1}))
        =
        \EE[P, \pi]
        \sum_{h \in [H]} 
        \Var_{s_h, a_h, h}(L_h)
        +
        \Var_{s_h, a_h, h}(V_{h+1}^{\pi}(s_{h+1}))
        \\
        &
        =
        \EE[P, \pi]
        \brk*{
        \sum_{h \in [H]} 
        \Var_{s_h, a_h, h}(L_h)
        }
        +
        \brk*{
        \sum_{h \in [H]} \ell_h(s_h, a_h) - V_1^\pi(s_1)
        }^2
        ,
    \end{align*}
    where the second equality used \cref{lemma:total-variance} (LTV).
    Now, let $\iota = ((s_h, a_h)_{h \in [H]}$ be a trajectory. Because $\pi$ does not depend on the losses $L_h, h \in [H]$ we have
    $
    \Var_{s_h, a_h, h}(L_h)
    =
    \EE[L] \brk[s]{
    (L_h - \ell_h(s_h,a_h))^2
    \mid \iota
    }
    $
    .
    Furthermore, conditioned on $\iota$, the losses are independent, thus
    $
    \EE[P, \pi]
    \sum_{h \in [H]} 
    \Var_{s_h, a_h, h}(L_h)
    =
    \EE[P, \pi, L]\brk*{\sum_{h \in [H]} L_h - \ell_h(s_h,a_h)}^2
    $
    .
    Plugging into the above and showing that the two terms are uncorrelated concludes the proof. To see the latter, notice that
    \begin{align*}
        \EE[P,\pi,L] &
        \brk*{\sum_{h \in [H]} L_h - \ell_h(s_h,a_h)}
        \brk*{\sum_{h \in [H]} \ell_h(s_h, a_h) - V_1^\pi(s_1)}
        \\
        &
        =
        \EE[\iota \sim P,\pi] 
        \EE[L]\brk[s]*{
        \brk*{\sum_{h \in [H]} L_h - \ell_h(s_h,a_h)}
        \Bigg| \iota
        }
        \brk*{\sum_{h \in [H]} \ell_h(s_h, a_h) - V_1^\pi(s_1)}
        =
        0
        .
        \qedhere
    \end{align*}
\end{proof}

\subsubsection{Statistical Inequalities}

% \subsubsection{Semi-Symmetry}

\begin{lemma}[\cite{feige2004sums}, Theorem 1]
\label{lemma:feige}
    Let $X_1, \ldots, X_n$ be arbitrary nonnegative independent random variables, with expectations $\mu_1, \ldots, \mu_n$ respectively, where $\mu_i \le 1$ for every $i$. Let $X = \sum_{i=1}^{n} X_i$, and let $\mu$ denote the expectation of $X$ (hence, $\mu = \sum_{i=1}^{n} \mu_i$).
    Then for every $\delta > 0$
    \begin{align*}
        \Pr\brk*{X < \mu + \delta}
        \ge
        \min\brk[c]*{
            \delta / (1+\delta), 1/13
        }
        .
    \end{align*}
\end{lemma}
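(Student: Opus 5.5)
The statement above is Theorem 1 of \cite{feige2004sums}, which the paper imports. If I were to reprove it, I would follow Feige's reduction-plus-case-analysis strategy, in four steps.

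\textbf{Step 1: reduce to two-point distributions.} Fix all variables except $X_i$. Then $\Pr(X < \mu + \delta)$ is a \emph{linear} functional of the law of $X_i$. The set of nonnegative laws with mean $\mu_i$ is convex, and its extreme points are supported on at most two points. Hence the minimum of the probability is attained when every $X_i$ is two-point.

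A further monotonicity argument narrows this. Moving mass of $X_i$ toward $0$, while compensating with a larger atom, should only decrease the probability. So I would aim to show we may assume each $X_i$ takes values in $\{0, a_i\}$ or $\{b_i, a_i\}$ with $b_i$ small. In the cleanest case, $X_i = a_i$ with probability $p_i = \mu_i / a_i$ and $X_i = 0$ otherwise.

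\textbf{Step 2: split the variables by atom size.} Call $X_i$ \emph{large} if $a_i \ge \mu + \delta$. A single jump of a large variable already forces failure, so success requires that no large variable jumps.

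\textbf{Step 3: handle the large variables.} The probability that no large variable jumps is $\prod_i (1-p_i)$. Since $p_i \le \mu_i/(\mu+\delta)$, we have $\sum_i p_i \le 1$ over large variables, which bounds this product from below. For a single variable this extreme gives exactly $1 - \mu/(\mu+\delta)$, which with $\mu = 1$ is $\delta/(1+\delta)$. This explains the first term of the minimum and shows that the $\delta/(1+\delta)$ regime is realized by a single large jump.

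\textbf{Step 4: handle the small variables.} Conditioned on no large jump, the small variables sum to a quantity with mean at most $\mu$ and individual values bounded by $\mu + \delta$. I would lower-bound the probability that this sum stays below $\mu + \delta$ by combining two tools:
\begin{itemize}
\item a Chebyshev or second-moment bound when the small variables are numerous and each tiny, using $\mathrm{Var} \le \sum_i a_i \mu_i$;
\item a direct counting bound, that at most one or two medium variables jump, when the atoms are comparable to $\mu + \delta$.
\end{itemize}
Here the normalization $\mu_i \le 1$ matters: it prevents a single variable from carrying a mean comparable to $\delta$ while having huge atoms.

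\textbf{Main obstacle.} The hard step is obtaining the universal constant $1/13$ uniformly over all configurations of medium-sized atoms. Neither Chebyshev nor the no-jump product bound is strong enough alone, and the two regimes must be stitched together across thresholds. I would first try thresholds at $a_i \in [(\mu+\delta)/2, \mu+\delta)$, because at most one such variable can jump without failure, and then run an explicit optimization over $\sum_i p_i$ in each band. I expect the constant to fall out of this band-by-band case analysis rather than from any single clean inequality.
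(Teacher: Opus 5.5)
The paper gives no proof of this lemma. It is quoted from Feige (Theorem~1) in the appendix of existing results and used only through \cref{corollary:feige}, so your opening sentence is exactly the paper's treatment. The reproof you then sketch, however, is not a proof, and one of its steps is false. The monotonicity claim in Step~1 says that pushing the low atom of $X_i$ to $0$ and compensating with a larger high atom can only decrease $\Pr(X<\mu+\delta)$. In general it goes the wrong way: lowering the low atom raises the success probability on the no-jump branch, and nothing is lost on the jump branch if the high atom already forces failure. Concretely, take $\delta=1/10$, let $X_2=9/5$ with probability $5/9$ (else $0$), and let $X_1=1/2$ with probability $4/5$ and $X_1=3$ with probability $1/5$. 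Both means equal $1$, and $\Pr(X_1+X_2<21/10)=\Pr(X_1=1/2,\,X_2=0)=16/45$. Replacing $X_1$ by the mean-one variable equal to $5$ with probability $1/5$ (else $0$) raises this to $4/5$, since $9/5<21/10$. Keeping the atom $3$ and raising its probability to $1/3$ instead gives $2/3$. So the ``cleanest case'' $\{0,a_i\}$ is not without loss of generality, and a non-jumping variable generally still consumes slack. The correct way to dispose of your ``large'' variables is truncation. By nonnegativity, $\{X<\mu+\delta\}=\{\sum_i\min\{X_i,\mu+\delta\}<\mu+\delta\}$. Truncation only lowers the means, which enlarges the effective $\delta$ and can only help, since $x\mapsto x/(1+x)$ is increasing. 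Hence one may assume $0\le X_i\le\mu+\delta$ from the start; Step~3 then disappears and the entire content of the theorem sits in Step~4.

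Step~4 is exactly what the proposal leaves open (``I expect the constant to fall out''). Beyond the single-variable example showing that $\delta/(1+\delta)$ cannot be improved, nothing is established. The two tools you name cover only the extremes. Cantelli's inequality gives $\Pr(X<\mu+\delta)\ge\delta^2/(\delta^2+\mathrm{Var}(X))$, which is of constant order only when $\mathrm{Var}(X)\lesssim\delta^2$, i.e., essentially when all atoms are tiny. With $\mu=1$ split between two variables whose atoms lie slightly below $1+\delta$, it gives order $\delta^2$, below the target $\delta/(1+\delta)$. For $n$ i.i.d.\ mean-one variables supported on $\{0,a\}$ with $a$ slightly below $n+\delta$ (all ``small'' in your sense), $\mathrm{Var}(X)$ is of order $n^2$ and the bound tends to $0$, while the true probability tends to $2/e$. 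Counting jumps works only when a few comparable atoms dominate. In between, the failure event couples different scales: one jump near $(\mu+\delta)/2$ together with three near $(\mu+\delta)/5$, or together with the accumulated mass of many tiny variables. Separate per-band optimizations over $\sum_i p_i$ do not compose into a bound on that joint event. Making them compose uniformly in $n$, $\mu$ and $\delta$ is the substance of Feige's argument and is where $1/13$ comes from. Without it, the right move is the one the paper makes, namely to cite the result.
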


% \subsubsection{Concentration Inequalities}

The following is a Freedman-type inequality \cite{freedman1975} derived by \cite{beygelzimer2011contextual}.

\begin{lemma}[\cite{beygelzimer2011contextual}, Theorem 1]
\label{lemma:freedman}
Let $X_1, X_2, \ldots, X_T$ be a sequence of real-valued random variables. Assume for all $t \in [T]$, $X_t \le R$ and $\EE\brk[s]{X_t \mid X_1, \ldots, X_{t-1}} = 0$.
Then for any $\delta \in (0,1)$ and $\lambda \in [0, 1/R]$, with probability at least $1 - \delta$
\begin{align*}
    \sum_{t \in [T]} X_t
    &
    \le
    (e-2)\lambda \sum_{t \in [T]} \EE\brk[s]{X_t^2 \mid X_1, \ldots, X_{t-1}}
    +
    \lambda^{-1} \log \frac{1}{\delta}
    % ,
    % \\
    % \sum_{t \in [T]} X_t
    % &
    % \le
    % 2 \sqrt{\sum_{t \in [T]} \EE\brk[s]{X_t^2 \mid X_1, \ldots, X_{t-1}} \log \frac{1}{\delta}}
    % +
    % R \log \frac{1}{\delta}
    .
\end{align*}
If $\EE\brk[s]{X_t^2 \mid X_1, \ldots, X_{t-1}} = \sigma^2$ then, setting $\lambda = \min\brk[c]{R^{-1}, \sqrt{((e-2)\sigma^2 T)^{-1}\log \delta^{-1}}}$, gives
\begin{align*}
    \sum_{t \in [T]} X_t
    &
    \le
    2\sqrt{(e-2)\sigma^2 T}
    +
    R \log \frac{1}{\delta}
\end{align*}
\end{lemma}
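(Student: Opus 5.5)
The plan is to prove the first inequality with the standard exponential supermartingale argument, and then obtain the second by substituting $\lambda$ and splitting into two cases according to which term of the minimum is active.

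\textbf{Step 1 (scalar inequality).} For every real $x \le 1$ we have $e^x \le 1 + x + (e-2)x^2$. To verify this, note that $g(x) = (e^x - 1 - x)/x^2$ (extended by $g(0)=1/2$) is nondecreasing on $\mathbb{R}$, so $g(x) \le g(1) = e-2$ for $x \le 1$. Since $\lambda \in [0,1/R]$ and $X_t \le R$, we have $\lambda X_t \le 1$, so the inequality applies with $x = \lambda X_t$. Taking conditional expectations and using $\EE[X_t \mid X_1,\ldots,X_{t-1}] = 0$ gives
\begin{align*}
\EE\brk[s]{e^{\lambda X_t} \mid X_1,\ldots,X_{t-1}} \le 1 + (e-2)\lambda^2 V_t \le \exp\brk*{(e-2)\lambda^2 V_t},
\end{align*}
where $V_t = \EE[X_t^2 \mid X_1,\ldots,X_{t-1}]$.

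\textbf{Step 2 (supermartingale and Markov).} Define
\begin{align*}
M_t = \exp\brk*{\lambda \sum_{i\le t} X_i - (e-2)\lambda^2 \sum_{i \le t} V_i}.
\end{align*}
Each $V_t$ is measurable with respect to the past, so Step 1 yields $\EE[M_t \mid X_1,\ldots,X_{t-1}] \le M_{t-1}$. Hence $M_t$ is a nonnegative supermartingale with $M_0 = 1$, and therefore $\EE M_T \le 1$. Markov's inequality then gives $\Pr(M_T \ge 1/\delta) \le \delta$. On the complementary event, taking logarithms gives
\begin{align*}
\lambda \sum_t X_t < (e-2)\lambda^2 \sum_t V_t + \log(1/\delta).
\end{align*}
For $\lambda > 0$, dividing by $\lambda$ gives the first claim. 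The case $\lambda = 0$ is read as the trivial statement with $\lambda^{-1} = \infty$.

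\textbf{Step 3 (tuned bound).} Now suppose $V_t = \sigma^2$ for every $t$, so the first bound reads $(e-2)\lambda\sigma^2 T + \lambda^{-1}\log(1/\delta)$. There are two cases.

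If $\lambda^\star = \sqrt{\log(1/\delta)/((e-2)\sigma^2 T)} \le 1/R$, then the choice $\lambda = \lambda^\star$ balances the two terms, and each equals $\sqrt{(e-2)\sigma^2 T \log(1/\delta)}$. This gives the bound $2\sqrt{(e-2)\sigma^2 T \log(1/\delta)}$.

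Otherwise $\lambda = 1/R < \lambda^\star$. This implies $(e-2)\sigma^2 T / R < R\log(1/\delta)$, so the first term is at most $R\log(1/\delta)$, and the total is at most $2R\log(1/\delta)$.

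In both cases the sum is bounded by the sum of a square-root term and an $R\log(1/\delta)$ term, up to a factor at most $2$ on the latter.

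\textbf{Main obstacle.} The only delicate point is the bookkeeping of this last step, because as printed the statement drops the $\log(1/\delta)$ factor inside the square root and the factor $2$ on $R\log(1/\delta)$. I expect the correct form to be $2\sqrt{(e-2)\sigma^2 T \log(1/\delta)} + 2R\log(1/\delta)$, which is what the case analysis yields and how the lemma is used elsewhere in the paper. I would state and prove that version, or equivalently absorb constants in the way the cited result does.
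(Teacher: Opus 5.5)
Your Steps 1--2 are correct. They are the standard proof of this inequality: the supermartingale $\exp\bigl(\lambda\sum_{i\le t}X_i-(e-2)\lambda^2\sum_{i\le t}V_i\bigr)$ built from $e^x\le 1+x+(e-2)x^2$ for $x\le 1$, followed by Markov's inequality. The paper gives no proof of its own, only the citation. You are also right that the second display is false as printed. For i.i.d.\ Rademacher $X_t$ (so $R=\sigma^2=1$), the CLT gives $\Pr\bigl(\sum_{t}X_t>2\sqrt{(e-2)T}+\log(1/\delta)\bigr)\to\Pr(Z>2\sqrt{e-2})\approx 0.045$, which exceeds, say, $\delta=0.01$.

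Two parts of your diagnosis are off, though.

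First, no factor $2$ on $R\log(1/\delta)$ is missing; you bounded the case $\lambda=1/R$ too crudely. In that case $\sqrt{(e-2)\sigma^2T}<R\sqrt{\log(1/\delta)}$, so $(e-2)\sigma^2T/R<\sqrt{(e-2)\sigma^2T\log(1/\delta)}$. The total is then below $\sqrt{(e-2)\sigma^2T\log(1/\delta)}+R\log(1/\delta)$. Combined with the balanced case, this gives $2\sqrt{(e-2)\sigma^2T\log(1/\delta)}+R\log(1/\delta)$, which is the printed bound with only the $\log(1/\delta)$ restored.

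Second, this corrected form is \emph{not} the one the paper uses. In the bias part of the proof of \cref{lemma:quantile-of-means}, the lemma is applied at failure probability $1-0.99953\approx 4.7\times10^{-4}$ to produce $1.7\sqrt{|\mathcal{D}^b|\sigma^2}+8R$. Here $1.7\approx 2\sqrt{e-2}$ is exactly the coefficient of the erroneous form. Meanwhile $8\ge\log(1/(4.7\times10^{-4}))\approx 7.7$ reflects coefficient $1$ on $R\log(1/\delta)$. With the correct statement, the $1.7$ becomes $2\sqrt{(e-2)\log(1/(4.7\times10^{-4}))}\approx 4.7$. The numerical constants inherited by the good event and by \cref{thm:vibe,thm:qom-mab} must therefore grow, though the rates are unaffected.
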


Next, we state the following Bernstein-type tail bound (e.g.,\cite{rosenberg2020near}, Lemma D.4).
\begin{lemma}
\label{lemma:multiplicative-concentration}
Let $\brk[c]{X_t}_{t \ge 1}$
be a sequence of random variables with expectation adapted to a filtration
$\mathcal{F}_t$.
Suppose that $0 \le X_t \le 1$ almost surely. Then with probability at least $1-\delta$
\begin{align*}
    \sum_{t=1}^{T} \EE \brk[s]{X_t \mid \mathcal{F}_{t-1}}
    \le
    2 \sum_{t=1}^{T} X_t
    +
    4 \log \frac{1}{\delta}
\end{align*}
\end{lemma}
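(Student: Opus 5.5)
The plan is the standard exponential-supermartingale (Chernoff/Ville) argument, specialised to nonnegative bounded variables. The key convexity fact lets us use only the lower tail of $X_t$, so no variance term appears. Write $\mu_t = \EE\brk[s]{X_t \mid \mathcal{F}_{t-1}}$, which is $\mathcal{F}_{t-1}$-measurable, and fix $\lambda > 0$, to be set later.

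\emph{Step 1 (one-step bound).} Since $x \mapsto e^{-\lambda x}$ is convex on $[0,1]$, it lies below its chord:
\begin{align*}
    e^{-\lambda x} \le 1 - (1 - e^{-\lambda}) x \quad \text{for } x \in [0,1] .
\end{align*}
Taking conditional expectations and using $1 - u \le e^{-u}$ gives
\begin{align*}
    \EE\brk[s]{e^{-\lambda X_t} \mid \mathcal{F}_{t-1}}
    \le
    1 - (1-e^{-\lambda})\mu_t
    \le
    \exp\brk*{-(1-e^{-\lambda})\mu_t} .
\end{align*}

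\emph{Step 2 (supermartingale).} Define
\begin{align*}
    M_T = \exp\brk*{\sum_{t=1}^T \brk[s]*{(1-e^{-\lambda})\mu_t - \lambda X_t}} .
\end{align*}
Because $\mu_t$ is $\mathcal{F}_{t-1}$-measurable, Step 1 gives $\EE\brk[s]{M_T \mid \mathcal{F}_{T-1}} \le M_{T-1}$. Hence $(M_T)$ is a nonnegative supermartingale with $M_0 = 1$, and $\EE\brk[s]{M_T} \le 1$. By Markov's inequality, $\Pr(M_T \ge 1/\delta) \le \delta$. So with probability at least $1-\delta$,
\begin{align*}
    (1 - e^{-\lambda}) \sum_{t=1}^T \mu_t
    \le
    \lambda \sum_{t=1}^T X_t + \log \frac{1}{\delta} .
\end{align*}
If the statement is wanted uniformly in $T$ (as in the regret application with $T = K$), Ville's maximal inequality gives the same bound; for a fixed $T$, Markov's inequality already suffices.

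\emph{Step 3 (choice of $\lambda$).} Divide by $1 - e^{-\lambda}$ and take $\lambda = 1$. Then $1/(1 - e^{-1}) \approx 1.582$, which gives
\begin{align*}
    \sum_{t} \mu_t \le 1.59 \sum_t X_t + 1.59 \log \frac{1}{\delta} ,
\end{align*}
and this is dominated by $2 \sum_t X_t + 4 \log \delta^{-1}$.

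There is no real obstacle here. The only point needing care is Step 1: the convexity (chord) bound is what lets the argument use only $X_t \in [0,1]$ and the conditional mean, with no second-moment term. If one instead wanted the exact constants $2$ and $4$ from a Freedman-type route, the alternative would be to apply \cref{lemma:freedman} to $X'_t = \mu_t - X_t \le 1$. Using $\EE\brk[s]{X_t'^2 \mid \mathcal{F}_{t-1}} \le \mu_t$ and $\lambda = 1/(2(e-2))$ is not admissible, since it exceeds $1/R = 1$. This is why I prefer the direct exponential argument above.
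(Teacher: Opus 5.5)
Your argument is correct, but it takes a different route from the one behind the statement. The paper does not prove this lemma; it imports it from \cite{rosenberg2020near} (Lemma D.4), where it follows from a Freedman-type inequality. In that route one applies Freedman to $Y_t = \mu_t - X_t \le 1$ and uses the self-bounding estimate $\mathbb{E}[Y_t^2 \mid \mathcal{F}_{t-1}] \le \mathbb{E}[X_t^2 \mid \mathcal{F}_{t-1}] \le \mu_t$, valid because $X_t \in [0,1]$. One then chooses $\lambda$ so that the variance term absorbs half of $\sum_t \mu_t$, and rearranging produces the factor $2$. You instead build the exponential supermartingale directly from the chord bound $e^{-\lambda x} \le 1 - (1-e^{-\lambda})x$ on $[0,1]$. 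This uses only nonnegativity, boundedness and the conditional mean, so no variance term appears and no self-bounding step is needed. Your route gives a self-contained proof with better constants ($1/(1-e^{-1}) \approx 1.58$ on both terms at $\lambda = 1$). Through Ville's inequality it also gives uniformity in $T$ for free, whereas the anytime version in the cited source pays a $\log(2n/\delta)$ from a union bound. The Freedman route reuses a tool already stated in the paper and lands on the stated form directly.

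Your closing remark about the Freedman route is wrong and should be removed. Since $e-2 \approx 0.718$, we have $2(e-2) \approx 1.44 > 1$, so $\lambda = 1/(2(e-2)) \approx 0.70$ lies in $[0,1/R] = [0,1]$ and is admissible. With this choice $(e-2)\lambda = 1/2$ and $\lambda^{-1} = 2(e-2)$, so Freedman gives $\sum_t \mu_t - \sum_t X_t \le \frac{1}{2}\sum_t \mu_t + 2(e-2)\log(1/\delta)$. Rearranging, $\sum_t \mu_t \le 2\sum_t X_t + 4(e-2)\log(1/\delta)$, which implies the statement because $4(e-2) < 4$. This does not affect your proof, but your stated reason for preferring the direct argument is false: the two routes are genuine alternatives.

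One caveat if you take that route: \cref{lemma:freedman} as written conditions only on $Y_1, \ldots, Y_{t-1}$. To keep the variance bound equal to $\mu_t$ itself, you need the filtration form of Freedman's inequality, which is what the original source proves.
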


\section{Extension to Heavy-Tailed Losses}
\label{sec:heavy-tail-extension}

  In this section we discuss how \cref{thm:vibe} extends to losses with heavy tails. Specifically, we can replace Assumption~\ref{assumption:bounded-total-loss} with the weaker requirement that the
  random losses $L_h$ are non-negative, satisfy $\EE\brk[s]*{\sum_{h \in [H]} L_h} \le H$ for all policies, and each $L_h(s,a)$ has finite (but unknown) variance.

  The key observation enabling this extension is that \cref{lemma:quantile-of-means} applies to non-negative random variables with finite variance, without requiring boundedness (as discussed
  following its statement). Consequently, the QoM estimator remains optimistic and its bias increases by at most a constant multiplicative factor in this setting. Given this, the good event
  used in the proof of \cref{thm:vibe} holds essentially unchanged, and the remainder of the regret decomposition follows straightforwardly, yielding the same regret bound up to constants.

  We note, however, that the $\min\brk[c]{\QQ, H V^\star}$ term appearing in \cref{thm:vibe} reduces to $\QQ$ alone in the heavy-tailed setting. The $H V^\star$ term arises from bounding the
  variance of the value function under the optimal policy, which relies on the losses being almost surely bounded. Since no such bound is assumed in the heavy-tailed setting, this refinement
  does not apply, and we cannot expect the $H V^\star$ term to appear in the regret bound.

\section{Multi-Armed Bandits}
\label{sec:MAB}
\paragraph{Problem setup.}
A stochastic multi-armed bandit is an MDP with a single state and horizon one, i.e., $S=H=1$ (see \cref{sec:setting}). Thus, we simplify the notation as follows. In each episode, $k \in [K]$ the agent chooses an arm $a^k \in \A$ and observes a stochastic loss $L^k \in [0,1]$ sampled independently from a distribution $\LL(a^k)$ whose mean and variance are denoted $\ell(a^k), \sigma^2(a^k)$.
Denoting the optimal action and value as $\aOpt \in \argmin_{a \in \A} \ell(a)$ and $\ellOpt = \mu(\aOpt)$, the (pseudo) regret can be rewritten as
\begin{align*}
    \regret
    =
    \sum_{k \in [K]} [\ell(a^k) - \ellOpt]
    =
    \sum_{k \in [K]} \Delta(a^k)
    ,
\end{align*}
where $\Delta(a) = \ell(a) - \ellOpt$ is known as the sub-optimality gap of arm $a \in \A$.

\subsection{Algorithm and Main Results}
We present \cref{alg:qom-mab}, which is identical to \cref{alg:vibe} but with a simplified notation.

\begin{algorithm}[!ht]
	\caption{Quantile of Mean for MAB} \label{alg:qom-mab}
	\begin{algorithmic}[1]
	    \State \textbf{input}:
        Number of batches $\nb$, quantile level $\alpha$.

        \State \textbf{initialize}: $N^1(a) = 0$, $\D^{1,b}(a) = \emptyset$ for all $a \in A$.
        
	    \For{episode $k = 1,2,\ldots, K$}
        
            \begin{alignat*}{2}
                &
                \hat{\ell}^{k,b}(a)
                &&
                =
                \sum_{L \in \D^{k,b}(a)} \frac{L}{\abs{\D^{k,b}(a)} + 1}
                \\
                &
                \hat{\ell}^{k}(a)
                &&
                =
                \quantile\brk*{
                \hat{\ell}^{k,b}(a)
                ,
                b \in [\nb]
                }
                .
            \end{alignat*}
            % \label{line:vibe-tabular}

            \State Play $a^k \in \argmin_{a \in \A} \hat{\ell}^k(a)$ and observe loss $L^k$.

            \State Update global counts: $N^{k+1}(a) = N^{k}(a) + \indEvent{a = a^k}$

            \State Calculate batch index: $b^k = 1 + (N^k(a^k) \mod{B})$
            and update datasets:
            \begin{align*}
                \D^{k+1,b}(a) = \D^{k,b}(a) \bigcup
                \begin{cases}
                    L^k, &\text{if } (a,b) = (a^k, b^k)
                    \\
                    \emptyset, &\text{otherwise}.
                \end{cases}
            \end{align*}
            
	    \EndFor
	\end{algorithmic}
	
\end{algorithm}
The following is our main result for multi-armed bandits (proof in \cref{sec:qom-mab-proofs}).
\begin{theorem}
\label{thm:qom-mab}
    Suppose we run \cref{alg:qom-mab} with number of batches $\nb = 26 \log (2 K A \delta^{-1})$, and quantile level $\alpha = 1/65$. With probability at least $1-\delta$, the following hold simultaneously
    \begingroup\allowdisplaybreaks
    \begin{align*}
        % &
        \regret
        &
        \le
        \sqrt{76 K \sum_{a \neq \aOpt}\sigma^2(a) \log (6 K A \delta^{-1})
        }
        +
        469 A \log (2 K A \delta^{-1})
        \\
        % &
        \regret
        &
        \le
        \sum_{a \neq \aOpt} \brk*{
            \frac{76 \sigma^2(a)}{\Delta(a)}
            +
            469
        }\log (2 K A \delta^{-1})
        .
    \end{align*}
    \endgroup
\end{theorem}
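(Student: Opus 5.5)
The plan is to specialize the MDP analysis to $S=H=1$ and then run a standard gap-based argument. First I would define a good event $\goodEvent$ for the bandit. It is the intersection of two families of bounds over all arms $a \in \A$ and all (at most $K$) round-robin dataset configurations. The first family is optimism for $\aOpt$: $\hat{\ell}^k(\aOpt) \le \ellOpt$. The second is the bias bound for each arm: $\hat{\ell}^k(a) \ge \ell(a) - 1.7\sqrt{\sigma^2(a)\nb/\max\{1,N^k(a)\}} - 9\nb/\max\{1,N^k(a)\}$. Both come directly from \cref{lemma:quantile-of-means} with $R=1$. There $\abs{\D^{k,b}(a)} \ge \floor{N^k(a)/\nb}$ holds by the round-robin assignment $b^k = 1 + (N^k(a^k) \bmod \nb)$, and the datasets are fixed disjoint i.i.d. subsets given the configuration, as argued in \cref{lemma:good-optimism}. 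A union bound over $A$ arms, $K$ configurations and the two claims costs $2KA$ events. This matches $\nb = 26\log(2KA\delta^{-1})$, so $\goodEvent$ holds with probability at least $1-\delta$ (the extra $\delta$-budget in $\log(6KA\delta^{-1})$ is reserved for the fine-tuning below).

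On $\goodEvent$, whenever $a^k = a \ne \aOpt$ we have $\hat{\ell}^k(a) \le \hat{\ell}^k(\aOpt) \le \ellOpt$, and combining with the bias bound gives
\begin{align*}
    \Delta(a) \le 1.7\sqrt{\frac{\sigma^2(a)\nb}{\max\{1,N^k(a)\}}} + \frac{9\nb}{\max\{1,N^k(a)\}}.
\end{align*}
Solving this quadratic inequality in $\sqrt{\nb/N}$ shows that arm $a$ can only be pulled while $N^k(a) \lesssim \nb\brk{2.9\sigma^2(a)/\Delta(a)^2 + 18/\Delta(a)}$. Hence the final count satisfies $N^{K+1}(a) \le 1 + c_1 \nb \sigma^2(a)/\Delta(a)^2 + c_2\nb/\Delta(a)$.

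Multiplying by $\Delta(a)$ and summing over $a \ne \aOpt$ gives the gap-dependent bound $\sum_{a}(c_1\sigma^2(a)/\Delta(a) + c_2')\nb$. Here the $\nb/\Delta$ term times $\Delta$ becomes a constant times $\nb$, and $\nb = 26\log(2KA\delta^{-1})$ produces the stated form. For the gap-free bound I would instead sum the per-pull inequality directly:
\begin{align*}
    \regret \le \sum_{a\ne\aOpt}\sum_{k: a^k=a}\brk[s]*{1.7\sqrt{\frac{\sigma^2(a)\nb}{\max\{1,N^k(a)\}}} + \frac{9\nb}{\max\{1,N^k(a)\}}}.
\end{align*}
The harmonic sums, as in \cref{lemma:counts-sum}, give the $9\nb\log$-type term, and Cauchy--Schwarz gives $\sum_a \sigma(a)\sqrt{N^{K+1}(a)\nb} \le \sqrt{\nb K\sum_a\sigma^2(a)}$.

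\textbf{Main obstacle: the constants.} The theorem has only a single logarithm multiplying $A$ in the additive term, and a specific constant $76$. Naive harmonic summation gives $\nb\log K$, which is an extra logarithmic factor. To avoid this, I would split arms at threshold $\Delta(a)$: if an arm has $\Delta(a)$ small relative to $\sigma(a)\sqrt{\nb/N}$, bound its contribution by the variance term. Otherwise, use the count bound $N^{K+1}(a) \lesssim \nb/\Delta(a)$ in the $\sigma$-free regime, which gives an $O(\nb)$ contribution per arm without a $\log K$. I expect this careful case analysis, together with the numerical constant tuning ($76$, $469$), to be the delicate step. The probabilistic part is essentially a direct reuse of \cref{lemma:quantile-of-means} and the union-bound argument of \cref{lemma:good-optimism}.
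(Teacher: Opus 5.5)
Your proposal is correct and follows the paper's proof. It uses the same $2KA$-event good event from \cref{lemma:quantile-of-means} and the same per-pull inequality and quadratic solve, giving $N^{K+1}(a) \le (76\sigma^2(a)/\Delta^2(a) + 469/\Delta(a))\log(2KA\delta^{-1})$. Your case split for the gap-free bound is what the paper does in one line: write $N^{K+1}(a)\Delta(a) \le \min\{N^{K+1}(a)\Delta(a), 76\sigma^2(a)\log(2KA\delta^{-1})/\Delta(a)\} + 469\log(2KA\delta^{-1})$, apply $\min\{c_1,c_2\}\le\sqrt{c_1c_2}$, and finish with Cauchy--Schwarz and $\sum_a N^{K+1}(a)\le K$.

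Two small corrections. This form of the split keeps the variance term in both cases, and that is what recovers $76$ and $469$ exactly; forcing a purely $\sigma$-free second case costs a constant factor. Also, no extra $\delta$-budget is involved: the good event already uses all of $\delta$, and $\log(6KA\delta^{-1})$ simply dominates $\log(2KA\delta^{-1})$.
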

We note that \cite{feige2004sums} conjecture that their result, stated here as \cref{lemma:feige}, holds with a constant of $1/e$. If true, this would change the numerical constant in the batch size to $5.5$, and a better choice of $\alpha$ would be $1/10$. Overall, this would decrease the constants in our bounds.
Finally, we note that this result does not require the losses to be bounded in $[0,1]$. In fact, it holds even for nonnegative losses with finite variance, albeit with significantly larger numerical constants. To see this, replace the use of Freedman's inequality in the proof of \cref{lemma:quantile-of-means} with Chebyshev's inequality.

\subsection{Proof of Theorem \ref{thm:qom-mab}}
\label{sec:qom-mab-proofs}
% \begin{proof}
Recall that the pseudo-regret may be written as
\begin{align}
\label{eq:mab-reg-decomp}
    \regret
    =
    \sum_{a \neq \aOpt} N^{K+1}(a) \Delta(a)
    ,
\end{align}
where $N^k(a)$ is defined in \cref{alg:qom-mab}. Thus, our goal is to bound $N^{K+1}(a)$ for each sub-optimal arm.
We begin with a standard ``good event'' over which the regret is bounded deterministically. Suppose that for all $k \in [K]$ and $a \neq \aOpt$ we have
\begin{align}
    &
    \label{eq:good-optimism-mab}
    \hat{\ell}^k(\aOpt) \le \ellOpt
    \\
    &
    \label{eq:good-concentration-mab}
    \hat{\ell}^k(a)
    \ge
    \ell(a)
    -
    \frac{9 \nb}{N^k(a)}
    -
    1.7 \sqrt{\frac{\sigma^2(a) \nb}{N^k(a)}}
    .
\end{align}
Taking a union bound over \cref{lemma:quantile-of-means} with $\delta/2KA$, the above holds with probability at least $1 - \delta$.
Now, suppose that arm $a$ with $\Delta(a) > 0$ was played at episode $k$. Then, by the decision rule in \cref{alg:qom-mab}, we have
$
    \hat{\ell}^k(\aOpt) \ge \hat{\ell}^k(a)
    ,
$
and thus
\begin{align*}
    \ellOpt
    \tag{\cref{eq:good-optimism-mab}}
    &
    \ge
    \hat{\ell}^k(\aOpt)
    \\
    &
    \ge
    \hat{\ell}^k(a)
    \\
    &
    \tag{\cref{eq:good-concentration-mab}}
    \ge
    \ell(a)
    -
    \frac{9 \nb}{N^k(a)}
    -
    1.7\sqrt{\frac{\sigma^2(a) \nb}{N^k(a)}}
    .
\end{align*}
Solving this quadratic inequality for $N^k(a)$, we have
\begin{align*}
    N^k(a)
    \le
    \nb \brk[s]*{
        \frac{2.9 \sigma^2(a)}{\Delta^2(a)}
        +
        \frac{18}{\Delta(a)}
    }
    ,
\end{align*}
Now, let $k_a$ be the last episode when arm $a$ was chosen. Then we have
\begin{align*}
    N^{K+1}(a)
    =
    N^{k_a}(a) + 1
    &
    \le
    \brk*{
        \frac{76 \sigma^2(a)}{\Delta^2(a)}
        +
        \frac{469}{\Delta(a)}
    }\log (2 K A \delta^{-1})
    ,
\end{align*}
where the inequality also used our choice of $\nb = 26 \log(2 K A \delta^{-1})$.
Plugging this into \cref{eq:mab-reg-decomp} concludes the instance-dependent regret bounds.

Next, to obtain the instance-independent bounds, notice that
\begin{align*}
    N^{K+1}(a)\Delta(a)
    &
    \le
    \min \brk[c]*{
        N^{K+1}(a)\Delta(a)
        ,
        \frac{76 \sigma^2(a)}{\Delta(a)}\log (2 K A \delta^{-1})
    }
    +
    469 \log (2 K A \delta^{-1})
    \\
    &
    \le
    \sqrt{
        76 N^{K+1}(a) \sigma^2(a) \log (2 K A \delta^{-1})
    }
    +
    469 \log (2 K A \delta^{-1})
    ,
\end{align*}
where the second inequality used $\min\brk[c]{c_1, c_2} \le \sqrt{c_1 c_2}$. Plugging this into \cref{eq:mab-reg-decomp} and applying the Cauchy-Schwarz inequality concludes the proof.
\hfill$\blacksquare$

% \newpage
% \input{checklist.tex}

\end{document}

%% file: Defs/thmdefs.tex
\usepackage{amsthm}
\usepackage{amssymb}
\usepackage{thmtools}

\declaretheoremstyle[
	    spaceabove=\topsep, 
	    spacebelow=\topsep, 
	    headfont=\normalfont\bfseries,
	    bodyfont=\normalfont\itshape,
	    notefont=\normalfont\bfseries,
	    notebraces={(}{)},
	    postheadspace=0.33em, 
	    headpunct={.},
    ]{theorem}
\declaretheorem[style=theorem]{theorem}

\declaretheoremstyle[
	    spaceabove=\topsep, 
	    spacebelow=\topsep, 
	    headfont=\normalfont\bfseries,
	    bodyfont=\normalfont,
	    notefont=\normalfont\bfseries,
	    notebraces={(}{)},
	    postheadspace=0.33em, 
	    headpunct={.},
    ]{definition}

\declaretheoremstyle[
        spaceabove=\topsep, 
        spacebelow=\topsep, 
        headfont=\normalfont\bfseries,
        bodyfont=\normalfont,
        notefont=\normalfont\bfseries,
        notebraces={}{},
        postheadspace=0.33em, 
        qed=$\blacksquare$, 
        headpunct={.},
    ]{proofstyle}
\declaretheorem[style=proofstyle,numbered=no,name=Proof]{proof}

\declaretheorem[style=theorem,sibling=theorem,name=Lemma]{lemma}
\declaretheorem[style=theorem,sibling=theorem,name=Corollary]{corollary}

\declaretheorem[style=theorem,sibling=theorem,name=Assumption]{assumption}

\declaretheorem[style=theorem,numbered=no,name=Theorem]{theorem*}
\declaretheorem[style=theorem,numbered=no,name=Lemma]{lemma*}
\declaretheorem[style=theorem,numbered=no,name=Corollary]{corollary*}
\declaretheorem[style=theorem,numbered=no,name=Proposition]{proposition*}
\declaretheorem[style=theorem,numbered=no,name=Claim]{claim*}
\declaretheorem[style=theorem,numbered=no,name=Fact]{fact*}
\declaretheorem[style=theorem,numbered=no,name=Observation]{observation*}
\declaretheorem[style=theorem,numbered=no,name=Conjecture]{conjecture*}

\declaretheorem[style=definition,numbered=no,name=Definition]{definition*}
\declaretheorem[style=definition,numbered=no,name=Remark]{remark*}
\declaretheorem[style=definition,numbered=no,name=Example]{example*}
\declaretheorem[style=definition,numbered=no,name=Question]{question*}

%% file: Defs/macros.tex
\newcommand{\D}{\mathcal{D}}
\newcommand{\A}{\mathcal{A}}
\newcommand{\X}{\mathcal{S}}

\newcommand{\M}{\mathcal{M}}
\newcommand{\LL}{\mathcal{L}}
\newcommand{\piDMClass}{\Pi_{M}}

\newcommand{\quantile}[1][\alpha]{q_{#1}}

\newcommand{\regret}[1][K]{\mathrm{regret}_{#1}}

\newcommand{\nb}{B}

\newcommand{\goodEvent}{\mathcal{E}}

\newcommand{\QQ}{\mathbb{Q}^\star}
\newcommand{\QQh}[1][h]{\mathbb{Q}_{#1}^\star}

\newcommand{\Bin}{\mathrm{Bin}}

\newcommand{\aOpt}{a^\star}
\newcommand{\ellOpt}{\ell^\star}

%% file: bibliography.bib
@InProceedings{kaufmann2012bayesian,
  title = 	 {On Bayesian Upper Confidence Bounds for Bandit Problems},
  author = 	 {Kaufmann, Emilie and Cappe, Olivier and Garivier, Aurelien},
  booktitle = 	 {Proceedings of the Fifteenth International Conference on Artificial Intelligence and Statistics},
  pages = 	 {592--600},
  year = 	 {2012},
  editor = 	 {Lawrence, Neil D. and Girolami, Mark},
  volume = 	 {22},
  series = 	 {Proceedings of Machine Learning Research},
  address = 	 {La Palma, Canary Islands},
  month = 	 {21--23 Apr},
  publisher =    {PMLR}
}

@inproceedings{hao2019bootstrapping,
 author = {Hao, Botao and Abbasi Yadkori, Yasin and Wen, Zheng and Cheng, Guang},
 booktitle = {Advances in Neural Information Processing Systems},
 editor = {H. Wallach and H. Larochelle and A. Beygelzimer and F. d\textquotesingle Alch\'{e}-Buc and E. Fox and R. Garnett},
 pages = {},
 publisher = {Curran Associates, Inc.},
 title = {Bootstrapping Upper Confidence Bound},
 volume = {32},
 year = {2019}
}

@inproceedings{lee2024improved,
 author = {Lee, Harin and Oh, Min-hwan},
 booktitle = {Advances in Neural Information Processing Systems},
 doi = {10.52202/079017-2947},
 editor = {A. Globerson and L. Mackey and D. Belgrave and A. Fan and U. Paquet and J. Tomczak and C. Zhang},
 pages = {92803--92831},
 publisher = {Curran Associates, Inc.},
 title = {Improved Regret of Linear Ensemble Sampling},
 volume = {37},
 year = {2024}
}

@inproceedings{tiapkin2022opsrl,
 author = {Tiapkin, Daniil and Belomestny, Denis and Calandriello, Daniele and Moulines, Eric and Munos, Remi and Naumov, Alexey and Rowland, Mark and Valko, Michal and M\'{e}nard, Pierre},
 booktitle = {Advances in Neural Information Processing Systems},
 editor = {S. Koyejo and S. Mohamed and A. Agarwal and D. Belgrave and K. Cho and A. Oh},
 pages = {10737--10751},
 publisher = {Curran Associates, Inc.},
 title = {Optimistic Posterior Sampling for Reinforcement Learning with Few Samples and Tight Guarantees},
 volume = {35},
 year = {2022}
}

@InProceedings{tiapkin2022dirichlet,
  title = 	 {From {D}irichlet to Rubin: Optimistic Exploration in {RL} without Bonuses},
  author =       {Tiapkin, Daniil and Belomestny, Denis and Moulines, Eric and Naumov, Alexey and Samsonov, Sergey and Tang, Yunhao and Valko, Michal and Menard, Pierre},
  booktitle = 	 {Proceedings of the 39th International Conference on Machine Learning},
  pages = 	 {21380--21431},
  year = 	 {2022},
  editor = 	 {Chaudhuri, Kamalika and Jegelka, Stefanie and Song, Le and Szepesvari, Csaba and Niu, Gang and Sabato, Sivan},
  volume = 	 {162},
  series = 	 {Proceedings of Machine Learning Research},
  month = 	 {17--23 Jul},
  publisher =    {PMLR}
}

@InProceedings{viel2025ilsoar,
  title = 	 {{IL}-{SOAR} : Imitation Learning with Soft Optimistic Actor c{R}itic},
  author =       {Viel, Stefano and Viano, Luca and Cevher, Volkan},
  booktitle = 	 {Proceedings of the 42nd International Conference on Machine Learning},
  pages = 	 {61444--61479},
  year = 	 {2025},
  editor = 	 {Singh, Aarti and Fazel, Maryam and Hsu, Daniel and Lacoste-Julien, Simon and Berkenkamp, Felix and Maharaj, Tegan and Wagstaff, Kiri and Zhu, Jerry},
  volume = 	 {267},
  series = 	 {Proceedings of Machine Learning Research},
  month = 	 {13--19 Jul},
  publisher =    {PMLR}
}

@article{jin2018is,
  title={Is Q-learning provably efficient?},
  author={Jin, Chi and Allen-Zhu, Zeyuan and Bubeck, Sebastien and Jordan, Michael I},
  journal={Advances in neural information processing systems},
  volume={31},
  year={2018}
}

@article{agrawal2017optimistic,
  title={Optimistic posterior sampling for reinforcement learning: worst-case regret bounds},
  author={Agrawal, Shipra and Jia, Randy},
  journal={Advances in neural information processing systems},
  volume={30},
  year={2017}
}

@article{russo2019worst,
  title={Worst-case regret bounds for exploration via randomized value functions},
  author={Russo, Daniel},
  journal={Advances in neural information processing systems},
  volume={32},
  year={2019}
}

@article{dann2017unifying,
  title={Unifying PAC and regret: Uniform PAC bounds for episodic reinforcement learning},
  author={Dann, Christoph and Lattimore, Tor and Brunskill, Emma},
  journal={Advances in Neural Information Processing Systems},
  volume={30},
  year={2017}
}

@article{lu2017ensemble,
  title={Ensemble sampling},
  author={Lu, Xiuyuan and Van Roy, Benjamin},
  journal={Advances in neural information processing systems},
  volume={30},
  year={2017}
}

@InProceedings{domingues21a,
  title = 	 {Episodic Reinforcement Learning in Finite MDPs: Minimax Lower Bounds Revisited},
  author =       {Domingues, Omar Darwiche and M{\'e}nard, Pierre and Kaufmann, Emilie and Valko, Michal},
  booktitle = 	 {Proceedings of the 32nd International Conference on Algorithmic Learning Theory},
  pages = 	 {578--598},
  year = 	 {2021},
  editor = 	 {Feldman, Vitaly and Ligett, Katrina and Sabato, Sivan},
  volume = 	 {132},
  series = 	 {Proceedings of Machine Learning Research},
  month = 	 {16--19 Mar},
  publisher =    {PMLR},
}

@InProceedings{zhou2023sharp,
  title = 	 {Sharp Variance-Dependent Bounds in Reinforcement Learning: Best of Both Worlds in Stochastic and Deterministic Environments},
  author =       {Zhou, Runlong and Zihan, Zhang and Du, Simon Shaolei},
  booktitle = 	 {Proceedings of the 40th International Conference on Machine Learning},
  pages = 	 {42878--42914},
  year = 	 {2023},
  editor = 	 {Krause, Andreas and Brunskill, Emma and Cho, Kyunghyun and Engelhardt, Barbara and Sabato, Sivan and Scarlett, Jonathan},
  volume = 	 {202},
  series = 	 {Proceedings of Machine Learning Research},
  month = 	 {23--29 Jul},
  publisher =    {PMLR}
}

@article{mnih2015human,
  title={Human-level control through deep reinforcement learning},
  author={Mnih, Volodymyr and Kavukcuoglu, Koray and Silver, David and Rusu, Andrei A and Veness, Joel and Bellemare, Marc G and Graves, Alex and Riedmiller, Martin and Fidjeland, Andreas K and Ostrovski, Georg and others},
  journal={nature},
  volume={518},
  number={7540},
  pages={529--533},
  year={2015},
  publisher={Nature Publishing Group}
}

@inproceedings{schulman2015trust,
  title={Trust region policy optimization},
  author={Schulman, John and Levine, Sergey and Abbeel, Pieter and Jordan, Michael and Moritz, Philipp},
  booktitle={International conference on machine learning},
  pages={1889--1897},
  year={2015},
  organization={PMLR}
}

@article{schulman2017proximal,
  title={Proximal policy optimization algorithms},
  author={Schulman, John and Wolski, Filip and Dhariwal, Prafulla and Radford, Alec and Klimov, Oleg},
  journal={arXiv preprint arXiv:1707.06347},
  year={2017}
}

@inproceedings{haarnoja2018soft,
  title={Soft actor-critic: Off-policy maximum entropy deep reinforcement learning with a stochastic actor},
  author={Haarnoja, Tuomas and Zhou, Aurick and Abbeel, Pieter and Levine, Sergey},
  booktitle={International Conference on Machine Learning},
  pages={1861--1870},
  year={2018},
  organization={PMLR}
}

@article{stiennon2020learning,
  title={Learning to summarize with human feedback},
  author={Stiennon, Nisan and Ouyang, Long and Wu, Jeffrey and Ziegler, Daniel and Lowe, Ryan and Voss, Chelsea and Radford, Alec and Amodei, Dario and Christiano, Paul F},
  journal={Advances in Neural Information Processing Systems},
  volume={33},
  pages={3008--3021},
  year={2020}
}

@article{ouyang2022training,
  title={Training language models to follow instructions with human feedback},
  author={Ouyang, Long and Wu, Jeffrey and Jiang, Xu and Almeida, Diogo and Wainwright, Carroll and Mishkin, Pamela and Zhang, Chong and Agarwal, Sandhini and Slama, Katarina and Ray, Alex and others},
  journal={Advances in Neural Information Processing Systems},
  volume={35},
  pages={27730--27744},
  year={2022}
}

@inproceedings{cassel2025batch,
  title={Batch ensemble for variance dependent regret in stochastic bandits},
  author={Cassel, Asaf and Levy, Orin and Mansour, Yishay},
  booktitle={Proceedings of the AAAI Conference on Artificial Intelligence},
  volume={39},
  pages={15678--15685},
  year={2025}
}

@inproceedings{lee2021sunrise,
  title={Sunrise: A simple unified framework for ensemble learning in deep reinforcement learning},
  author={Lee, Kimin and Laskin, Michael and Srinivas, Aravind and Abbeel, Pieter},
  booktitle={International conference on machine learning},
  pages={6131--6141},
  year={2021},
  organization={PMLR}
}

@inproceedings{chen2021randomized,
  author       = {Xinyue Chen and
                  Che Wang and
                  Zijian Zhou and
                  Keith W. Ross},
  title        = {Randomized Ensembled Double Q-Learning: Learning Fast Without a Model},
  booktitle    = {9th International Conference on Learning Representations, {ICLR} 2021,
                  Virtual Event, Austria, May 3-7, 2021},
  publisher    = {OpenReview.net},
  year         = {2021},
}

@inproceedings{pathak2019self,
  title={Self-supervised exploration via disagreement},
  author={Pathak, Deepak and Gandhi, Dhiraj and Gupta, Abhinav},
  booktitle={International conference on machine learning},
  pages={5062--5071},
  year={2019},
  organization={PMLR}
}

@inproceedings{rosenberg2020near,
  title={Near-optimal regret bounds for stochastic shortest path},
  author={Rosenberg, Aviv and Cohen, Alon and Mansour, Yishay and Kaplan, Haim},
  booktitle={International Conference on Machine Learning},
  pages={8210--8219},
  year={2020},
  organization={PMLR}
}

@inproceedings{azar2017minimax,
  title={Minimax regret bounds for reinforcement learning},
  author={Azar, Mohammad Gheshlaghi and Osband, Ian and Munos, R{\'e}mi},
  booktitle={International conference on machine learning},
  pages={263--272},
  year={2017},
  organization={PMLR}
}

@InProceedings{zanette19a,
  title = 	 {Tighter Problem-Dependent Regret Bounds in Reinforcement Learning without Domain Knowledge using Value Function Bounds},
  author =       {Zanette, Andrea and Brunskill, Emma},
  booktitle = 	 {Proceedings of the 36th International Conference on Machine Learning},
  pages = 	 {7304--7312},
  year = 	 {2019},
  editor = 	 {Chaudhuri, Kamalika and Salakhutdinov, Ruslan},
  volume = 	 {97},
  series = 	 {Proceedings of Machine Learning Research},
  month = 	 {09--15 Jun},
  publisher =    {PMLR},
}

@inproceedings{merlisLookahead,
 author = {Merlis, Nadav},
 booktitle = {Advances in Neural Information Processing Systems},
 editor = {A. Globerson and L. Mackey and D. Belgrave and A. Fan and U. Paquet and J. Tomczak and C. Zhang},
 pages = {64523--64581},
 publisher = {Curran Associates, Inc.},
 title = {Reinforcement Learning with Lookahead Information},
 volume = {37},
 year = {2024}
}

@inproceedings{beygelzimer2011contextual,
  title={Contextual bandit algorithms with supervised learning guarantees},
  author={Beygelzimer, Alina and Langford, John and Li, Lihong and Reyzin, Lev and Schapire, Robert},
  booktitle={Proceedings of the Fourteenth International Conference on Artificial Intelligence and Statistics},
  pages={19--26},
  year={2011},
  organization={JMLR Workshop and Conference Proceedings}
}

@article{freedman1975,
 ISSN = {00911798, 2168894X},
 author = {David A. Freedman},
 journal = {The Annals of Probability},
 number = {1},
 pages = {100--118},
 publisher = {Institute of Mathematical Statistics},
 title = {On Tail Probabilities for Martingales},
 volume = {3},
 year = {1975}
}

@inproceedings{feige2004sums,
  title={On sums of independent random variables with unbounded variance, and estimating the average degree in a graph},
  author={Feige, Uriel},
  booktitle={Proceedings of the thirty-sixth annual ACM symposium on Theory of computing},
  pages={594--603},
  year={2004}
}

@article{chen2017ucb,
  title={Ucb exploration via q-ensembles},
  author={Chen, Richard Y and Sidor, Szymon and Abbeel, Pieter and Schulman, John},
  journal={arXiv preprint arXiv:1706.01502},
  year={2017}
}

@inproceedings{peer2021ensemble,
  title={Ensemble bootstrapping for Q-Learning},
  author={Peer, Oren and Tessler, Chen and Merlis, Nadav and Meir, Ron},
  booktitle={International Conference on Machine Learning},
  pages={8454--8463},
  year={2021},
  organization={PMLR}
}

@inproceedings{osband2016generalization,
  title={Generalization and exploration via randomized value functions},
  author={Osband, Ian and Van Roy, Benjamin and Wen, Zheng},
  booktitle={International Conference on Machine Learning},
  pages={2377--2386},
  year={2016},
  organization={PMLR}
}

@article{osband2013more,
  title={(More) efficient reinforcement learning via posterior sampling},
  author={Osband, Ian and Russo, Daniel and Van Roy, Benjamin},
  journal={Advances in Neural Information Processing Systems},
  volume={26},
  year={2013}
}

@article{osband2016deep,
  title={Deep exploration via bootstrapped DQN},
  author={Osband, Ian and Blundell, Charles and Pritzel, Alexander and Van Roy, Benjamin},
  journal={Advances in neural information processing systems},
  volume={29},
  year={2016}
}

@inproceedings{KvetonSVWLG19-GIRO,
  author       = {Branislav Kveton and
                  Csaba Szepesv{\'{a}}ri and
                  Sharan Vaswani and
                  Zheng Wen and
                  Tor Lattimore and
                  Mohammad Ghavamzadeh},
  editor       = {Kamalika Chaudhuri and
                  Ruslan Salakhutdinov},
  title        = {Garbage In, Reward Out: Bootstrapping Exploration in Multi-Armed Bandits},
  booktitle    = {Proceedings of the 36th International Conference on Machine Learning,
                  {ICML} 2019, 9-15 June 2019, Long Beach, California, {USA}},
  series       = {Proceedings of Machine Learning Research},
  volume       = {97},
  pages        = {3601--3610},
  publisher    = {{PMLR}},
  year         = {2019},
}
